\relax
\documentclass[letterpaper]{article} 
\usepackage{aaai21}  
\usepackage{times}  
\usepackage{helvet} 
\usepackage{courier}  
\usepackage[hyphens]{url}  
\usepackage{graphicx} 
\urlstyle{rm} 
\usepackage{natbib}  
\usepackage{caption} 
\frenchspacing  
\setlength{\pdfpagewidth}{8.5in}  
\setlength{\pdfpageheight}{11in}  
\pdfinfo{
/Title (Asking the Right Questions: Learning Interpretable Action Models Through Query Answering)
/Author (Pulkit Verma, Shashank Rao Marpally, Siddharth Srivastava)
/TemplateVersion (2021.1)
} 

\setcounter{secnumdepth}{2} 

%

\title{Asking the Right Questions:\\Learning Interpretable Action Models Through Query Answering}
\author {
        Pulkit Verma,
        Shashank Rao Marpally, {\normalfont and}
        Siddharth Srivastava \\
}
\affiliations {
    School of Computing, Informatics, and Decision Systems Engineering\\
    Arizona State University, Tempe, AZ 85281, USA\\
    \{verma.pulkit, smarpall, siddharths\}@asu.edu
}

\usepackage{amssymb,amsmath,amsthm}
\usepackage{pict2e}
\usepackage{subcaption}
\usepackage{algorithm}
\usepackage{algorithmic}
\usepackage{tikz}

\theoremstyle{definition}
\newtheorem{theorem}{Theorem}
\newtheorem{example}{Example}
\newtheorem{definition}{Definition}
\newtheorem{lemma}{Lemma}

\newcommand{\prune}{\langle\rangle}

\newcommand{\dist}{
  \tikz[scale=.2]{\draw[semithick] (0,-1) to ++(360:.4) to ++(0,-1.22) to ++(360:.4);}%
}

\newenvironment{sproof}{
	\proof}{\endproof}

\newcommand{\h}{\mathcal{H}}
\newcommand{\ag}{\mathcal{A}}
\newcommand{\m}{\mathcal{M}}
\newcommand{\Q}{\mathbb{Q}}
\newcommand{\q}{\mathcal{Q}}
\newcommand{\f}{\mathcal{F}}

\newcommand{\mysssection}[1]{\noindent\textbf{#1}\hspace{10pt}}

\begin{document}
\maketitle

\begin{abstract}
    This paper develops a new approach for estimating an
    interpretable, relational model of a black-box
	autonomous agent that can plan and act. Our main
	contributions are a new paradigm for estimating such
	models using a rudimentary query interface with the agent
	and a hierarchical querying algorithm that generates
	an interrogation policy for estimating the agent's
	internal model in a user-interpretable vocabulary.
	Empirical evaluation of our approach shows that despite
	the intractable search space of possible agent models, our
	approach allows correct and scalable estimation of
	interpretable agent models for a wide class of black-box
	autonomous agents. Our results also show that this
	approach can use predicate classifiers to learn
	interpretable models of planning agents that represent
	states as images.
\end{abstract}

\section{Introduction}
\label{sec:introduction}

The growing deployment of AI systems ranging from personal digital
assistants to self-driving cars leads to a pervasive problem: how
would a user ascertain whether an AI system will be safe, reliable, or
useful in a given situation?  This problem becomes particularly
challenging when we consider that most autonomous systems are not
designed by their users; their internal software may be unavailable or
difficult to understand, and it may even change from initial specifications
as a result of learning.  Such scenarios feature \emph{black-box} AI
agents whose models may not be available in terminology that the user
understands. They also show that in addition to developing better AI
systems, we need to develop new algorithmic paradigms for assessing
arbitrary AI systems and for determining the
minimal requirements for AI systems in order to ensure interpretability
and to support such assessments~\cite{srivastava2021unifying}.

This paper presents a new approach for addressing these questions.  It
develops an algorithm for estimating interpretable, relational models
of AI agents by querying them. In doing so, it requires the AI system
to have only a primitive query-response capability to ensure
interpretability. Consider
a situation where Hari(ette) ($\h$) wants a grocery-delivery robot
($\ag$) to bring some groceries, but s/he is unsure whether it is up
to the task and wishes to estimate $\ag$'s internal model in an
interpretable representation that s/he is comfortable with (e.g., a
relational STRIPS-like
language~\cite{Fikes1971,McDermott_1998_PDDL,fox03_pddl}). If $\h$
was dealing with a delivery person, s/he might ask them questions such
as ``would you pick up orders from multiple persons?'' and ``do you
think it would be alright to bring refrigerated items in a regular
bag?'' If the answers are ``yes'' during summer, it would be a cause
for concern. Na\"ive approaches for generating such questions to
ascertain the limits and capabilities of an agent are
infeasible.\footnote{ Just 2 actions and 5 grounded propositions would
yield $7^{2\times5}\sim10^{8}$ possible STRIPS-like models -- each
proposition could be absent, positive or negative in the precondition
and effects of each action, and cannot be positive (or negative) in
both preconditions and effect simultaneously.  A query strategy that
inquires about each occurrence of each proposition would be not only
unscalable but also inapplicable to simulator-based agents that do not
know their actions' preconditions and effects.}

\begin{figure}
    \centering
    \includegraphics[width=\columnwidth]{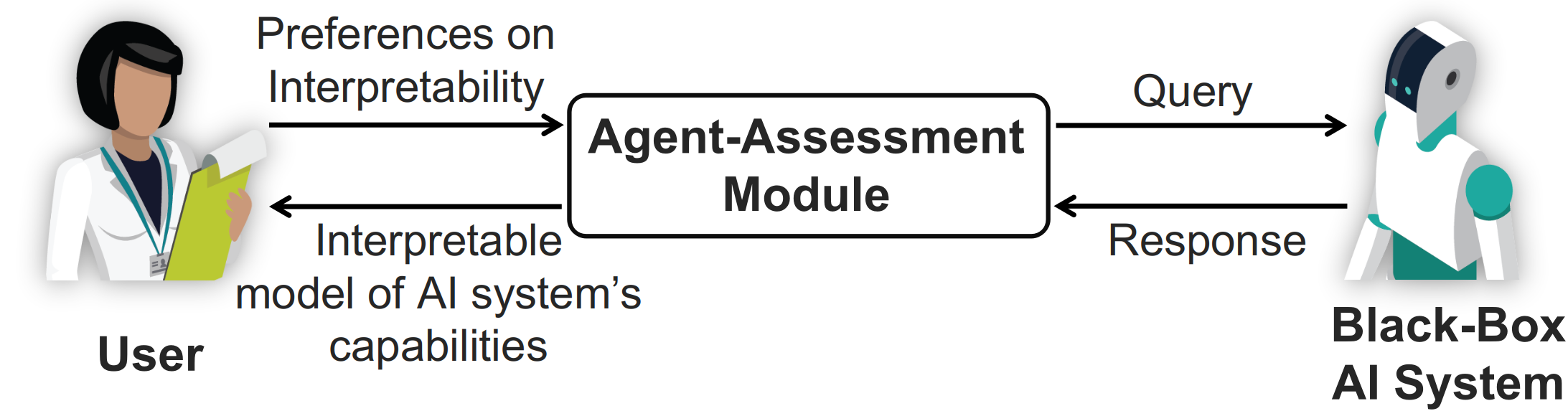}
    \caption{The agent-assessment module uses its user's
    preferred vocabulary, queries the AI system, and delivers a
    user-interpretable causal model of the AI system’s capabilities.
    The AI system does not need to know the user's vocabulary or
    modeling language.}
    \label{fig:aam}
\end{figure}

We propose an agent-assessment module (AAM), shown in
Fig.\,\ref{fig:aam}, which can be connected with an arbitrary AI agent
that has a rudimentary query-response capability: the assessment
module connects $\ag$ with a simulator and provides a sequence of
instructions, or a plan as a \emph{query}. $\ag$ executes the plan in
the simulator and the assessment module uses the simulated outcome as
the response to the query. Thus, given an agent, the assessment module
uses as input: a user-defined vocabulary, the agent's instruction set,
and a compatible simulator. These inputs reflect natural requirements
of the task and are already quite commonly supported: AI systems are
already designed and tested using compatible simulators, and they need
to specify their instruction sets in order to be usable. The user
provides the concepts that they can understand and these concepts can
be defined as functions on simulator states.

In developing the first steps towards this paradigm, we assume that
the user wishes to estimate $\ag$'s internal model as a STRIPS-like
relational model with conjunctive preconditions, add lists, and delete
lists, and that the agent's model is expressible as such. Such models
can be easily translated into interpretable descriptions such as
``under situations where \emph{preconditions} hold, if the agent $\ag$ executes
actions $a_1,\dots, a_k$ it would result in \emph{effects},'' where
preconditions and effects use only the user-provided concepts.
Furthermore, such models can be used to investigate counterfactuals and
support assessments of causality~\cite{Halpern16_causality}.

This fundamental framework (Sec.\,\ref{sec:approach}) can be developed
to support different types of agents as well as various query and
response modalities. E.g., queries and responses could use a speech
interface for greater accessibility, and agents with reliable inbuilt
simulators/lookahead models may not need external simulators. This
would allow AAM to pose queries such as ``what do you think would
happen if you did $\langle$\emph{query plan}$\rangle$'', and the
learnt model would reflect $\ag$'s self-assessment. The ``agent''
could be an arbitrary entity, although the expressiveness of the
user-interpretable vocabulary would govern the scope of the learnt
models and their accuracy. Using AAM with such agents would also help
make them compliant with Level II assistive AI -- systems that make it
easy for operators to learn how to use them
safely~\cite{srivastava2021unifying}.

Our algorithm for the assessment module (Sec.\,\ref{sec:algorithm})
generates a sequence of queries ($\q$) depending on the agent’s
responses ($\theta$) during the query process; the result of the
overall process is a complete model of $\mathcal{A}$. To generate
queries, we use a top-down process that eliminates large classes of
agent-inconsistent
models by computing queries that discriminate between pairs of
\emph{abstract models}. When an abstract model's answer to a query
differs from the agent's answer, we effectively eliminate the entire
set of possible concrete models that are refinements of this abstract
model. Sec.\,\ref{sec:approach} presents our overall framework
with algorithms and theoretical results about their convergence
properties.

Our empirical evaluation (Sec.\,\ref{sec:experiments}) shows that
this method can efficiently learn correct models for black-box
versions of agents using hidden models from the
IPC~\footnote{https://www.icaps-conference.org/competitions}.
It also shows that AAM can use image-based predicate classifiers
to infer correct models for simulator-based agents that respond with
an image representing the result of query plan's execution.

\section{Related Work}
\label{sec:related_work}

A number of researchers have explored the problem of learning agent
models from observations of its
behavior~\cite{gil_94_learning,Yang2007,Cresswell09,Zhuo13action}.
Such action-model learning approaches have also found practical
applications in robot navigation~\cite{balac_2000_learning}, player
behavior modeling~\cite{Krishnan_towards_2020}, etc. To the best of
our knowledge, ours is the first approach to address the problem of
generating query strategies for inferring relational models of
black-box agents.

\citet{Amir08} use logical filtering~\cite{Amir03} to learn partially
observable action models from the observation traces.
LOCM~\cite{Cresswell09} and LOCM2~\cite{Cresswell11} present another
class of algorithms that use finite-state machines to create action
models from observed plan traces. \citet{Camacho19} present an approach
for learning highly expressive LTL models from an agent's observed state
trajectories using an oracle with knowledge of the target LTL
representation. This oracle can also generate counterexamples when the
estimated model differs from the true model. In contrast, our approach
does not require such an oracle. Also, unlike \citet{Stern2017},
our approach does not need intermediate states in execution traces.
In contrast to approaches for white-box model maintenance~\cite{Bryce2016},
our approach does not require $\ag$ to know about $\h$'s preferred
vocabulary.

LOUGA~\cite{Kucera18} combines a genetic algorithm with an ad-hoc
method to learn planning operators from observed plan traces.
FAMA~\cite{aineto2019learning} reduces model recognition to a planning
problem and can work with partial action sequences and/or state traces
as long as correct initial and goal states are provided. While both
FAMA and LOUGA require a postprocessing step to update the learnt
model's preconditions to include the intersection of all states where
an action is applied, it is not clear that such a process would
necessarily converge to the correct model. Our experiments indicate
that such approaches exhibit oscillating behavior in terms of model
accuracy because some data traces can include spurious predicates, which
leads to spurious preconditions being added to the model's actions. 
FAMA
also assumes that there are no negative literals in action preconditions.

\citet{bonet20learning} present an algorithm for learning relational
models using a SAT-based method when the action schema, predicates, etc.
are not available. This approach takes as input a predesigned correct
and complete directed graph encoding the structure of the entire state
space. The authors note that their approach is viable for problems with
small state spaces. While our method provides an end-to-end solution,
it can also be used in conjunction with such approaches to create the
inputs they need. \citet{Khardon96} address the problem of making
model-based inference faster \emph{given a set of queries}, under the
assumption that a static set of models represents the true knowledge
base.

In contrast to these directions of research, our approach directly
queries the agent and is guaranteed to converge to the true model
while presenting a running estimate of the accuracy of the derived
model; hence, it can be used in settings where  the agent’s model
changes due to learning or a software update. In such a scenario, our
algorithm can restart to query the system, while approaches that derive
models from observed plan traces would require arbitrarily long
data collection sessions to get sufficient
uncorrelated data.

Incremental Learning Model~\cite{ng2019incremental}
uses reinforcement learning to learn a nonstationary model without
using plan traces, and requires extensive training to learn the
full model correctly. \citet{chitnis2021glib} present an approach
for learning probabilistic relational models where they use goal sampling
as a heuristic for generating relevant data, while we reduce that problem
to query synthesis using planning. Their approach is shown to work well
for stochastic environments, but puts a much higher burden on the AI system
for inferring its model.
This is because the AI system has to generate a conjunctive goal formula while
maximizing exploration, find a plan to reach that goal, and correct the
model as it collects observations while executing the plan.

The field of active learning~\cite{settles12} addresses the related
problem of selecting which data-labels to acquire for learning
single-step decision-making models using statistical measures of
information. However, the effective feature set here is the set of
all possible plans, which makes conventional methods for evaluating
the information gain of possible feature labelings infeasible.
In contrast, our approach uses a hierarchical abstraction
to select queries to ask, while inferring a multistep
decision-making (planning) model. Information-theoretic metrics could
also be used in our approach whenever such information is available.

\section{The Agent-Interrogation Task}
\label{sec:approach}

We assume that $\h$ needs to estimate $\ag$'s model as a STRIPS-like
planning model
represented as a pair
$\mathcal{M} = \langle \mathbb{P}, \mathbb{A} \rangle$, where
$\mathbb{P} = \{p_1^{k_1},\dots, p_n^{k_n} \}$ is a finite set of
predicates with arities $k_i$; $\mathbb{A} = \{a_1,\dots, a_k \}$
is a finite set of parameterized actions (operators). Each action
$a_j \in \mathbb{A}$ is represented as a tuple $\langle header(a_j),
pre(a_j), \emph{eff}(a_j) \rangle $, where $header(a_j)$ is the action
header consisting of action name and action parameters, $pre(a_j)$
represents the set of predicate atoms that must be true in a state
where $a_j$ can be applied, $\emph{eff}(a_j)$ is the set of positive
or negative predicate atoms that will change to true or false
respectively as a result of execution of the action $a_j$. Each
predicate can be instantiated using the parameters of an action, where
the number of parameters are bounded by the maximum arity of the
action. E.g., consider the action $\emph{load\_truck}(?v1, ?v2, ?v3)$
and predicate $at(?x, ?y)$ in the IPC Logistics domain. This predicate
can be instantiated using action parameters $?v1$, $?v2$, and $?v3$ as
$at(?v1, ?v1)$, $at(?v1, ?v2)$, $at(?v1, ?v3)$, $at(?v2, ?v2)$,
$at(?v2, ?v1)$, $at(?v2, ?v3)$, $at(?v3, ?v3)$, $at(?v3, ?v1)$,
and $at(?v3, ?v2)$. We represent the set of all such possible
predicates instantiated with action parameters as $\mathbb{P}^*$.

AAM uses the following information as input. It receives its instruction
set in the form of $header(a)$ for each $a \in \mathbb{A}$ from the agent.
AAM also receives a predicate vocabulary
$\mathbb{P}$ from the user with functional definitions of each predicate.
This gives AAM sufficient information to perform a dialog with $\ag$ about
the outcomes of hypothetical action sequences.

We define the overall problem of agent interrogation as follows. Given
a class of queries and an agent with an unknown model which can answer
these queries, determine the model of the agent. More precisely, an
\emph{agent interrogation task} is defined as a tuple $\langle
\mathcal{M}^\mathcal{A}, \mathbb{Q}, \mathbb{P}, \mathbb{A}_H\rangle$,
where $\mathcal{M}^\mathcal{A}$ is the true model (unknown to AAM) of
the agent $\ag$  being interrogated, $\mathbb{Q}$ is the class of
queries that can be posed to the agent by AAM, and $\mathbb{P}$ and
$\mathbb{A}_H$ are the sets of predicates and action headers that AAM
uses based on inputs from $\h$ and $\mathcal{A}$. The objective of
the agent interrogation task is to derive the agent model $\m^\ag$
using $\mathbb{P}$ and $\mathbb{A}_H$. Let $\Theta$ be the set of
possible answers to queries. Thus, strings $\theta^* \in \Theta^*$
denote the information received by AAM at any point in the query
process. Query policies for the agent interrogation task are functions
$\theta^*\rightarrow \mathbb{Q}\cup \{\emph{Stop}\}$ that map sequences
of answers to the next query that the interrogator should ask. The
process stops with the \emph{Stop} query. In other words, for all
answers $\theta \in \Theta$, all valid query policies map all sequences
$x\theta$ to \emph{Stop} whenever $x\in \Theta^*$ is mapped to
\emph{Stop}. This policy is computed and executed online.

\subsubsection{Components of agent models} In order to formulate our
solution approach, we consider a model $\mathcal{M}$ to be comprised
of components called \emph{palm} tuples of the form $\lambda = \langle
p, a, l, m \rangle$, where $p$ is an instantiated predicate from the
vocabulary $\mathbb{P}^*$; $a$ is an action from the set of
parameterized actions $\mathbb{A}$, $l \in \{\emph{pre}, \emph{eff}\}$,
and $m \in \{+, - , \emptyset\}$. For convenience, we use the subscripts
$p,a,l,$ or $m$ to denote the corresponding component in a palm tuple.
The presence of a palm tuple $\lambda$ in a model denotes the fact that
in that model, the predicate $\lambda_p$ appears in an action $\lambda_a$
at a location $\lambda_l$ as a true (false) literal when sign $\lambda_m$
is positive (negative), and is absent when $\lambda_m = \emptyset$. This
allows us to define the set-minus operation $M \setminus \lambda$ on this
model as removing the palm tuple $\lambda$ from the model.
We consider two palm tuples $\lambda_1 = \langle p_1, a_1, l_1, m_1
\rangle$ and $\lambda_2 = \langle p_2, a_2, l_2, m_2 \rangle$ to be
\emph{variants} of each other ($\lambda_1 \sim \lambda_2$) iff they
differ only on mode $m$, i.e., $\lambda_1 \sim \lambda_2 \Leftrightarrow
(\lambda_{1_p} = \lambda_{2_p}) \land (\lambda_{1_a} = \lambda_{2_a})
\land (\lambda_{1_l} = \lambda_{2_l}) \land (\lambda_{1_m} \neq
\lambda_{2_m})$. Hence, mode assignment to a \textit{pal} tuple
$\gamma = \langle p, a, l \rangle$ can result in 3 palm tuple variants
$\gamma^+ = \langle p, a, l, +\rangle$, $\gamma^- = \langle p, a, l,
-\rangle$, and $\gamma^\emptyset = \langle p, a, l,\emptyset\rangle$.

\subsubsection{Model abstraction} We now define the notion of abstraction
used in our solution approach. Several approaches have explored the use
of abstraction in planning~\cite{Sacerdoti74,Giunchiglia92,Helmert07,
Backstrom13,Srivastava16}. The definition of abstraction used in this
work extends the concept
of predicate and propositional domain abstractions~\cite{Srivastava16}
to allow for the projection of a single \emph{palm} tuple $\lambda$.

An abstract model is one in which all variants of
at least one pal tuple are absent. Let $\Lambda$ be the set of all
possible palm tuples which can be generated using a predicate
vocabulary $\mathbb{P}^*$ and an action header set $\mathbb{A}_H$.
Let $\mathcal{U}$ be the set of all consistent (abstract and
concrete) models that can be expressed as subsets of $\Lambda$,
such that no model has multiple variants of the same palm tuple.
We define abstraction of a model as:

\begin{definition}
\label{def:abstraction}
    The \emph{abstraction of a model} $\m$ with
    respect to a palm tuple $\lambda \in \Lambda$, is
    defined by $f_\lambda: \mathcal{U} \rightarrow
    \mathcal{U}$ as $f_\lambda (\m) = \m \setminus \lambda$.
\end{definition}

\begin{figure}[t]
    \centering
    \includegraphics[width=\columnwidth]{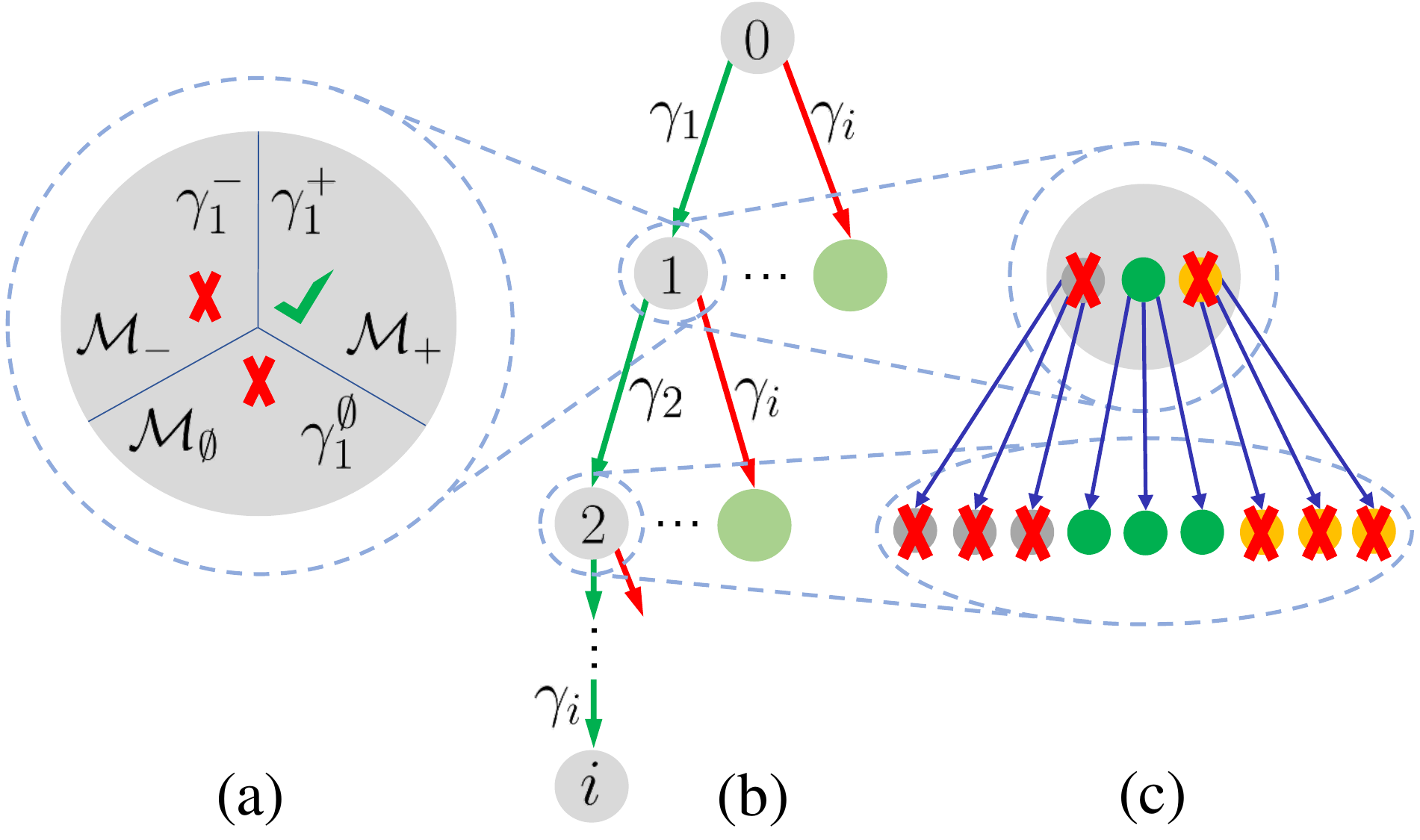}
    \caption{(b) Lattice segment explored in random order of
    $\gamma_i \in \Gamma$; (a) At each node, 3 abstract models are
    generated and 2 of them are discarded based on query responses;
    (c) An abstract model rejected at any level is equivalent to
    rejecting 3 models at the level below, 9 models two levels down,
    and so on.}
    \label{fig:lattice_iaa}
\end{figure}

We extend this notation to define the abstraction of a set of models $\m$
with respect to a palm tuple $\lambda$ as $X = \{f_\lambda (m): m \in \m\}$.
We use this abstraction framework to define a subset-lattice over
abstract models (Fig.\,\ref{fig:lattice_iaa}(b)). Each node in the
lattice represents a collection of possible abstract models which are
possible variants of a pal tuple $\gamma$. E.g., in the node labeled 1
in Fig.\,\ref{fig:lattice_iaa}(b), we have models corresponding to
$\gamma_1^+$, $\gamma_1^-$, and $\gamma_1^\emptyset$. Two nodes in the
lattice are at the same level of abstraction if they contain the same
number of pal tuples. Two nodes $n_i$ and $n_j$ in the lattice are
connected if all the models at $n_i$ differ with all the models in $n_j$
by a single palm tuple. As we move up in the lattice following these
edges, we get more abstracted versions of the models, i.e., containing
less number of pal tuples; and we get more concretized models, i.e.,
containing more number of pal tuples, as we move downward. We now define
this model lattice:

\begin{definition}
\label{def:lattice}
	A \textbf{\emph{model lattice $\mathcal{L}$}} is a 5-tuple
	$\mathcal{L} = \langle N, E, \Gamma, \ell_N, \ell_E \rangle$, where
    $N$ is a set of lattice nodes,
    $\Gamma$ is the set of all pal tuples $\langle p, a, l\rangle$,
    $\ell_N: N \rightarrow 2^{2^\Lambda}$ is a node label function where
    $\Lambda = \Gamma \times \{+,-,\emptyset\}$ is the set of all palm
    tuples, $E$ is the set of lattice edges, and
    $\ell_E: E \rightarrow \Gamma$ is a function mapping edges to edge
    labels such that for each edge $n_i \rightarrow n_j,
    \,\ell_N(n_j) = \{\xi \cup \{ \gamma^k\}|\, \xi \in \ell_N(n_i),
    \gamma = \ell_E(n_i \rightarrow n_j), k \in\{+,-,\emptyset\}\}$,
    and $\ell_N(\top) = \{\phi\}$ where $\top$ is the supremum
    containing the empty model $\phi$.
\end{definition}

A node $n \in N$ in this lattice $\mathcal{L}$ can be
uniquely identified by the sequence of pal tuples that label the edges
leading to it from the supremum. As shown in Fig.\,\ref{fig:lattice_iaa}(a),
even though theoretically $\ell_N: N \rightarrow 2^{2^\Lambda}$, not all the
models are stored at any node as at least one is pruned out based on some
query $\q$ $ \in \mathbb{Q}$. Additionally, in these model lattices,
every node has an edge going out from it corresponding to each pal tuple
that is not present in the paths leading to it from the most abstracted
node. At any stage during the interrogation, nodes in such a lattice are
used to represent the set of possible models given the agent's
responses up to that point. At every step, our 
algorithm
creates queries online that help us determine the next descending edge to take
from a lattice node; corresponding to the path $0,\dots,i$ in
Fig.\,\ref{fig:lattice_iaa}(b). This also avoids
generating and storing the complete lattice, which can be doubly
exponential in number of predicates and actions.

\subsubsection{Form of agent queries} As discussed earlier, based on $\ag$'s
responses $\theta$, we pose queries to the agent and infer $\ag$'s model. We
express queries as functions that map models to answers. Recall that
$\mathcal{U}$ is the set of all possible (concrete and abstract) models,
and $\Theta$ is the set of possible responses. A \emph{query} $\mathcal{Q}$
is a function $\mathcal{Q}: \mathcal{U} \rightarrow \Theta$.

In this paper, we utilize only one class of queries: \emph{plan outcome
queries} ($\mathcal{Q}_{PO}$), which are parameterized by a state
$s_I$ and a plan $\pi$. Let $P$ be the set of predicates $\mathbb{P}^*$
instantiated with objects $O$ in an environment. $\mathcal{Q}_{PO}$
queries ask $\ag$ the length of the longest prefix of the plan $\pi$
that it can execute successfully when starting in the state $s_\mathcal{I}
\subseteq P$ as well as the final state $s_\mathcal{F} \subseteq P$ that this
execution leads to. E.g., ``Given that the truck $t1$ and package $p1$
are at location $l1$, what would happen if you executed the plan $\langle
load\_truck(p1,t1,l1)$, $drive(t1,l1,l2)$, $unload\_truck(p1,t1,l2)
\rangle$?''

A response to such queries can be of the form ``I can execute the plan
till step $\ell$ and at the end of it $p1$ is in truck $t1$ which is at
location $l1$''. Formally, the response $\theta_{PO}$ for plan outcome
queries is a tuple $\langle \ell, s_{\mathcal{F}} \rangle$, where $\ell$
is the number of steps for which the plan $\pi$ could be executed, and
$s_{\mathcal{F}} \subseteq P$ is the final state after executing $\ell$
steps of the plan. If the plan $\pi$ cannot be executed fully according
to the agent model $\mathcal{M}^\mathcal{A}$ then $\ell < len(\pi)$,
otherwise $\ell = len(\pi)$. The final state $s_{\mathcal{F}}\subseteq P$
is such that $\mathcal{M}^\ag \models \pi{[1:\ell]}(s_{\mathcal{I}}) =
s_{\mathcal{F}}$, i.e., starting with a state $s_{\mathcal{I}}$, $\m^\ag$
successfully executed first $\ell$ steps of the plan $\pi$. Thus,
$\mathcal{Q}_{PO}: \mathcal{U} \rightarrow \mathbb{N} \times 2^P$,
where $\mathbb{N}$ is the set of natural numbers.

Not all queries are useful, as some of them might not increase our
knowledge of the agent model at all. Hence, we define some properties
associated with each query to ascertain its usability.
A query is \emph{useful} only if it can distinguish between two models. More
precisely, a query $\mathcal{Q}$ is said to \emph{distinguish} a pair
of models $\mathcal{M}_i$ and $\mathcal{M}_j$, denoted as
$\mathcal{M}_i \dist^{\mathcal{Q}} \mathcal{M}_j$, iff
$\mathcal{Q}(\mathcal{M}_i) \neq \mathcal{Q}(\mathcal{M}_j)$.

\begin{definition}
Two models $\m_i$ and $\m_j$ are said to be \emph{distinguishable},
denoted as $\m_i \dist \,\m_j$, iff there exists a query that can
distinguish between them, i.e., $\exists \q\,\,\m_i \dist^{\q} \m_j$.
\end{definition}

Given a pair of abstract models, we wish to determine whether one of
them can be pruned, i.e., whether there is a query for which at least one
of their answers is inconsistent with the agent's answer. Since this is
computationally expensive to determine, and we wish to reduce the number
of queries made to the agent, we first evaluate whether the two models
can be distinguished by any query, independent of consistency of their
response with that of the agent. If the models are not distinguishable,
it is redundant to try to prune one of them under the given query class.

Next, we determine if at least one of the two distinguishable models is
consistent with the agent. When comparing the responses of two models at
different levels of abstraction, we must consider the fact that the agent's
response may be at a different level of abstraction if the given pair
of models is abstract. Taking this into account, we formally define what
it means for an abstract model $\m_i$'s response to be consistent
with that of agent model $\m^\ag$:

\begin{definition}
\label{def:consistency}
Let $\q$ be a query such that $\m_i \dist^{\q} \m_j$;
$\q(\m_i) = \langle \ell^i,\langle p^i_1,\dots, p^i_m \rangle \rangle$,
$\q(\m_j) = \langle \ell^j,\langle p^j_1,\dots, p^j_n \rangle \rangle$, and
$\q(\m^\ag) = \langle \ell^\ag, \langle p^\ag_1,\dots,p^\ag_k\rangle \rangle$.
$\m_i$'s response to $\q$ is \emph{consistent} with that of $\m^\ag$, i.e.,
$\q(\m^\ag) \models \q(\m_i)$ if $\ell^\ag = len(\pi^\q)$, $len(\pi^\q) =
\ell^i$ and $\{ p^i_1,\dots, p^i_m \} \subseteq \{ p^\ag_1,\dots,p^\ag_k\}$.
\end{definition}

Using this notion of consistency, we can now reason that given a set of
distinguishable models $\m_i$ and $\m_j$, and their responses in addition to
the agent's response to the distinguishing query, the models are prunable if
and only if exactly one of their responses is consistent with that of the
agent. Formally, we define prunability as:

\begin{definition}
\label{def:prunable}
Given an agent-interrogation task $\langle \m^\ag, \Q,\mathbb{P}, \mathbb{A}_H
\rangle$, two models $\m_i$ and $\m_j$ are \emph{prunable}, denoted as
$\m_i \prune \m_j$, iff  $\exists \mathcal{Q}\in \mathbb{Q}:
\m_i \dist^\mathcal{Q} \m_j $ $\land$ $(\mathcal{Q}(\m^\ag) \models
\mathcal{Q}(\m_i)$ $\land$ $\mathcal{Q}(\m^\ag) \not\models \mathcal{Q}(\m_j))$
$\lor$ $(Q(\m^\ag) \not\models \mathcal{Q}(\m_i)$ $\land$ $\mathcal{Q}(\m^\ag)
\models \mathcal{Q}(\m_j))$.
\end{definition}

\subsection{Solving the Interrogation Task}
\label{sec:algorithm}

We now discuss how we solve the agent interrogation task by incrementally
adding palm variants to the class of abstract models and pruning out
inconsistent models by generating distinguishing queries.

\begin{example}
    Consider the case of a delivery agent. Assume that AAM is considering
    two abstract models $\m_1$ and $\m_2$ having only one action
    $load\_truck(?p,?t,?l)$ and the predicates $at(?p,?l)$, $at (?t,?l)$,
    $in (?p,?t)$, and that the agent's model is $\m^\ag$
    (Fig.\,\ref{fig:load_truck}). AAM can ask the agent what will happen
    if $\ag$ loads package $p1$ into truck $t1$ at location $l1$ twice. The
    agent would respond that it could execute the plan only till length
    $1$, and the state at the time of this failure would be $at(t1,l1)
    \land in(p1,t1)$.
\end{example}

\begin{figure}[t]
    \small
    (a) $\m^\ag$\textbf's $\texttt{load\_truck(?p,?t,?l)}$ action (unknown to $\h$)\\
    {
        \centering
        \texttt{
            \begin{tabular}{|p{0.35\columnwidth} p{0.09\columnwidth} p{0.40\columnwidth}|}
                \hline
                at(?t,?l), at(?p,?l) & $\longrightarrow$ & in(?p,?t), $\neg$(at(?p,?l))\\
                \hline
            \end{tabular}
        }
    }
    \\
    (b) $\mathcal{M}_1$'s  $\texttt{load\_truck(?p,?t,?l)}$ action\\
    {
        \centering
        \texttt{
            \begin{tabular}{|p{0.35\columnwidth} p{0.09\columnwidth} p{0.40\columnwidth}|}
                \hline
                at(?t,?l), at(?p,?l) & $\longrightarrow$ & in(?p,?t)\\
                \hline
            \end{tabular}
        }
    }
    \\
    (c) $\mathcal{M}_2$'s  $\texttt{load\_truck(?p,?t,?l)}$ action\\
    {
        \centering
        \texttt{
            \begin{tabular}{|p{0.35\columnwidth} p{0.09\columnwidth} p{0.40\columnwidth}|}
                \hline
                at(?t,?l) & $\longrightarrow$ & in(?p,?t)\\
                \hline
            \end{tabular}
        }
    }
    \\
    (d) $\mathcal{M}_3$'s $\texttt{load\_truck(?p,?t,?l)}$ action\\
    {
        \centering
        \texttt{
            \begin{tabular}{|p{0.35\columnwidth} p{0.09\columnwidth} p{0.40\columnwidth}|}
                \hline
                at(?t,?l) & $\longrightarrow$ &  ()\\
                \hline
            \end{tabular}
        }
    }
    \caption{$\emph{load\_truck}$ actions of the agent model $\m^\ag$ and 
    3 abstracted models $\m_1$, $\m_2$, and $\m_3$. Here $X\rightarrow Y$
    means that $X$ is the precondition of an action and $Y$ is the effect.}
    \label{fig:load_truck}
\end{figure}

Algorithm \ref{alg:AIA} shows AAM's overall algorithm. It takes the agent
$\ag$, the set of instantiated predicates $\mathbb{P}^*$, the set of all
action headers $\mathbb{A}_H$, and a set of random states $\mathbb{S}$ as
input, and gives the set of functionally equivalent estimated models
represented by $poss\_models$ as output. $\mathbb{S}$ can be generated in
a preprocessing step given $\mathbb{P}^*$. AIA initializes $poss\_models$
as a set consisting of the empty model $\phi$ (line 3)
representing that AAM is starting at the supremum $\top$ of the model
lattice.

In each iteration of the main loop (line 4), AIA maintains an abstraction
lattice and keeps track of the current node in the lattice. It picks a
pal tuple $\gamma$ corresponding to one of the descending edges in the
lattice from a node given by some input ordering of $\Gamma$. The
correctness of the algorithm does not depend on this ordering. It then
stores a temporary copy of $poss\_models$ as $new\_models$ (line 5) and
initialize an empty set at each node to store the pruned models (line 6).

The inner loop (line 7) iterates over the set of all possible abstract
models that AIA has not rejected yet, stored as $new\_models$. It then
loops over pairs of modes (line 8), which are later used to generate
queries and refine models. For the chosen pair of modes,
\textit{generate\_query()} is called (line 9) which returns two models
concretized with the chosen modes and a query $\q$ which can
distinguish between them based on their responses.

AIA then calls \textit{filter\_models()} which poses the query $\q$ to
the agent and the two models. Based on their responses, AIA prunes the
models whose responses are not consistent with that of the agent
(line 11). Then it updates the estimated set of possible models
represented by $poss\_models$ (line 18).

\begin{algorithm}[tb]

   \caption{Agent Interrogation Algorithm (AIA)}
   \label{alg:AIA}
\begin{algorithmic}[1]
    \STATE {\bfseries Input:} $\ag, \mathbb{A}_H, \mathbb{P}^*, \mathbb{S}$
    \STATE {\bfseries Output:} poss\_models
    \STATE Initialize poss\_models $=$ \{$\phi$\}
    \FOR {$\gamma$ in some input pal ordering $\Gamma$}
        \STATE new\_models $\gets$ poss\_models
        \STATE pruned\_models$=$ \{\}
		\FOR {each $\m'$ in new\_models}

			\FOR{each pair $\{i, j\}$ in $\{+,-,\emptyset\}$}
			    \STATE $\q$, $\m_i$, $\m_j$ $\gets$ generate\_query($\m', i, j, \gamma, \mathbb{S}$)
				\STATE $\m_{prune}$ $ \gets$filter\_models($\q,\m^\ag\!,\m_i, \m_j\!$)
				\STATE pruned\_models$\gets$ pruned\_models $\cup \, \m_{prune}$
			\ENDFOR
		\ENDFOR
		\IF {pruned\_models is $\emptyset$}
            \STATE update\_pal\_ordering($\Gamma, \mathbb{S}$)
            \STATE continue
		\ENDIF
        \STATE poss\_models $\gets$ new\_models $\times\, \{\gamma^+, \gamma^-, \gamma^\emptyset\}$ $\setminus$\\ \qquad \qquad \qquad \qquad pruned\_models
	\ENDFOR
\end{algorithmic}
\end{algorithm}

If AIA is unable to prune any model at a node (line 14), it modifies the
pal tuple ordering (line 15). AIA continues this process until it reaches
the most concretized node of the lattice (meaning all possible palm
tuples $\lambda \in \Lambda$ are refined at this node). The remaining
set of models represents the estimated set of models for $\ag$. The
number of resolved palm tuples can be used as a running estimate of
accuracy of the derived models. AIA requires $O(|\mathbb{P}^*| \!\! \times\!\!
|\mathbb{A}|)$ queries as there are $ 2 \times|\mathbb{P}^*| \!\!\times \!|\mathbb{A}|$
pal tuples. However, our empirical studies show that we never generate so
many queries.

\subsection{Query Generation}
\label{sec:query_gen}

The query generation process corresponds to the \textit{generate\_query()}
module in AIA which takes a model $\m'$, the pal tuple $\gamma$, and 2
modes $i,j \in \{+,-,\emptyset\}$ as input; and returns the models
$\m_i = \m' \cup \{\gamma^i\}$ and $\m_j = \m' \cup \{\gamma^j\}$, and a
plan outcome query $\q$ distinguishing them, i.e., $\m_i \dist^\q \m_j$.

\begin{algorithm}[tb]
   \caption{Query Generation Algorithm}
   \label{alg:query}
\begin{algorithmic}[1]
    \STATE {\bfseries Input:} $\m', i, j, \gamma, \mathbb{S}$
    \STATE {\bfseries Output:} $\q, \m_i, \m_j$
    \STATE $\m_i,\m_j \gets$ add\_palm($\m',i,j, \gamma$)
    \FOR {$s_\mathcal{I}$ in $\mathbb{S}$}
        \STATE dom, prob $\gets$ get\_planning\_prob ($s_\mathcal{I},\m_i, \m_j$)
        \STATE $\pi$ $\gets$ planner(dom, prob)
        \STATE $\q \gets \langle s_\mathcal{I}, \pi \rangle$
        \STATE \textbf{if} $\pi$ \textbf{then} break \textbf{end if}
    \ENDFOR
    \RETURN $\q$, $\m' \cup \{\gamma^i\}$, $\m' \cup \{\gamma^j\}$
\end{algorithmic}
\end{algorithm}

Plan outcome queries have 2 components, an initial state $s_\mathcal{I}$ and a plan
$\pi$. AIA gets $s_\mathcal{I}$ from the input set of random states $\mathbb{S}$ (line
4). Using $s_\mathcal{I}$ as the initial state, the idea is to find a plan, which when
executed by $\m_i$ and $\m_j$ will lead them either to different states, or
to a state where only one of them can execute the plan further. Later we pose
the same query to $\ag$ and prune at least one of $\m_i$ and $\m_j$. Hence, we
aim to prevent the models inconsistent with the agent model $\m^\ag$ from
reaching the same final state as $\m^\ag$ after executing the query $\q$ and
following a different state trajectory. To achieve this, we reduce the problem
of generating a plan outcome query from $\m_i$ and $\m_j$
into a planning problem.

The reduction proceeds by creating temporary models $\m_i''$ and $\m_j''$.
We now discuss how to generate them. We add the pal tuple
$\gamma = \langle p,a,l \rangle$ in modes $i$ and $j$ to $\m'$ to get $\m_i'$
and $\m_j'$, respectively.
If the location $l = \emph{eff}$, we add the palm tuple normally
to $\m'$, i.e., $\m_m' = \m' \cup \langle p , a, l, m \rangle$, where
$m \in \{i,j\}$. If $l = \emph{pre}$, we add a dummy predicate
$p_u$ in disjunction with the predicate $p$ to the precondition of both
the models.
We then modify the models $\m_i'$ and $\m_j'$
further in the following way:
\begin{align*}
    \m_m'' = \m_m' \cup \, &\{\langle p_u, a', l',+ \rangle : \forall a',l' \, \, \langle a',l' \rangle \not\in \\ &\{ \!\langle a^*\!\!,l^*\!\rangle\!\!: \exists m^* \, \langle p,a^*,l^*,m^*\rangle \in \m'\}\}\\
    \cup \, &\{\langle p_u, a', l',- \rangle : \forall a',l' \, \, \langle a',l' \rangle \in \\
    &\{ \!\langle a^*\!\!,l^*\!\rangle\!\!: \!l^*\!\!= \!\emph{eff} \land\! \exists m^* \, \!\langle p,a^*,l^*,m^*\rangle \!\!\in\! \m'\}\}
\end{align*}

$p_u$ is added only for generating a distinguishing query and is not part
of the models $\m_i$ and $\m_j$ returned by the query generation process.
Without this modification, an inconsistent abstract model may have a
response consistent with $\ag$.

We now show how to reduce plan outcome query generation into a planning
problem $P_{PO}$ (line 5). $P_{PO}$ uses conditional effects in its actions
(in accordance with PDDL~\cite{McDermott_1998_PDDL,fox03_pddl}). The model used to define
$P_{PO}$ has predicates from both models $\m_i''$ and $\m_j''$
represented
as $\mathcal{P}^{\m_i''}$ and $\mathcal{P}^{\m_j''}$ respectively,
in addition
to a new dummy predicate $p_\psi$. The action headers are the same as
$\mathbb{A}_H$. Each action's precondition is a disjunction of the
preconditions of $\m_i''$ and $\m_j''$. This makes an action applicable in a
state $s$ if either $\m_i''$ or $\m_j''$ can execute it in $s$. The effect of
each action has 2 conditional effects, the first applies the effects of both
$\m_i''$ and $\m_j''$'s action if preconditions of both $\m_i''$ and $\m_j''$ are
true, whereas the second makes the dummy predicate $p_\psi$ true if precondition of
only one of $\m_i''$ and $\m_j''$ is true. Formally, we express this planning
problem as $P_{PO} = \langle \m^{PO}, s_{\mathcal{I}}, G \rangle$, where
$\m^{PO}$ is a model with predicates $\mathbb{P}^{PO}= \mathcal{P}^{\m_i''}
\cup \mathcal{P}^{\m_j''} \cup p_{\psi}$, and actions $\mathbb{A}^{PO}$
where for each action $a \in \mathbb{A}^{PO}$,
$\emph{pre}(a) = \emph{pre}(a^{\m_i''}) \lor
\emph{pre}(a^{\m_j''})$ and $\emph{eff}(a)$ =
\begin{align*}
    (\emph{when}\, &(\emph{pre}(a^{\m_i''}) \land \emph{pre}(a^{\m_j''}))
    (\emph{eff}\,(a^{\m_i''}) \land \emph{eff}\,(a^{\m_j''})))\\
    (\emph{when}\, &((\emph{pre}(a^{\m_i''}) \land \neg \emph{pre}(a^{\m_j''})) \lor \\
    &(\neg \emph{pre}(a^{\m_i''}) \land    \emph{pre}(a^{\m_j''}))) \,(p_\psi)),
\end{align*}
The initial state $s_{\mathcal{I}} = s_{\mathcal{I}}^{\m_i''} \land
s_{\mathcal{I}}^{\m_j''}$, where $s_{\mathcal{I}}^{\m_i''}$ and
$s_{\mathcal{I}}^{\m_j''}$ are copies of all predicates in
$s_{\mathcal{I}}$, and $G$ is the goal formula expressed as
$\exists p \,\, (p^{\m_i''} \land \neg p^{\m_j''}) \lor
(\neg  p^{\m_i''} \land p^{\m_j''}) \lor p_\psi$.

With this formulation, the goal is reached when an action in $\m_i''$
and $\m_j''$ differs in either a precondition (making only one of them
executable in a state), or an effect (leading to different final states on
applying the action). E.g., consider the models with differences in
$\emph{load\_truck}(p1,t1,l1)$ as shown in Fig.\,\ref{fig:load_truck}.
From the state $at(t1,l1) \land \neg at(p1,l1)$, $\m_2$ can execute
$\emph{load\_truck}(p1,t1,l1)$ but $\m_1$ cannot. Similarly, in state
$at(t1,l1) \land at(p1,l1)$, executing $\emph{load\_truck}(p1,t1,l1)$
will cause $\m^\ag$ and $\m_1$ to end up in states differing in predicate
$at(p1,l1)$. Hence, given the correct initial state, the solution to the
planning problem $P_{PO}$ will give the correct distinguishing plan.

\begin{theorem}
\label{thm:plan_outcome}
Given a pair of models $\mathcal{M}_i$ and $\mathcal{M}_j$, the planning
problem $P_{PO}$ has a solution iff $\mathcal{M}_i$ and $\mathcal{M}_j$
have a distinguishing plan outcome query $\mathcal{Q}_{PO}$.
\end{theorem}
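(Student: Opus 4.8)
The plan is to establish the two directions of the equivalence by reading $P_{PO}$ as (essentially) a synchronous product of $\m_i''$ and $\m_j''$ and matching its executions with the pair of independent executions of $\m_i$ and $\m_j$ under a common plan. The point to make precise first is that, by construction, a reachable state of $P_{PO}$ is a triple $(s^i, s^j, p_\psi)$ over the disjoint predicate copies $\mathcal{P}^{\m_i''}$, $\mathcal{P}^{\m_j''}$ and the flag $p_\psi$; an action is $P_{PO}$-applicable iff it is applicable in $\m_i''$ from $s^i$ or in $\m_j''$ from $s^j$; applying it advances both copies by their own effects when both preconditions hold and sets $p_\psi$ when exactly one does; and $G$ holds precisely when the two copies disagree on some predicate or $p_\psi$ is set. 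Since $\m_i$ and $\m_j$ differ only in the modes of one \emph{pal} tuple $\gamma = \langle p, a, l\rangle$, a common plan run from a common $s_\mathcal{I}$ on the two models stays in lockstep --- same applicability, same state on the shared predicates --- until the first step at which an action involving $\gamma$ behaves differently: a disagreement about $p$ in the resulting state when $l = \emph{eff}$, and a disagreement about whether $a$ is executable when $l = \emph{pre}$.

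For the ($\Leftarrow$) direction I would assume $(s_\mathcal{I}, \pi)$ distinguishes $\m_i$ and $\m_j$, so their plan-outcome responses $\langle \ell^i, s^i_{\mathcal{F}}\rangle$ and $\langle \ell^j, s^j_{\mathcal{F}}\rangle$ differ, and run $\pi$ in $P_{PO}$ from $s_\mathcal{I}$. If the executable prefix lengths differ, take the first step $t$ at which the two copies disagree on applicability; then $t \le \min(\ell^i,\ell^j)+1$, the copies mirror the separate runs of $\m_i,\m_j$ through step $t-1$, and at step $t$ exactly one proceeds, firing the $p_\psi$-conditional effect and reaching $G$ after $\pi[1:t]$. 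If instead the prefixes have equal length but $s^i_{\mathcal{F}} \ne s^j_{\mathcal{F}}$ (possible only when $l = \emph{eff}$), then $p_\psi$ is never triggered, both copies run $\pi$, and the predicate on which the final states differ witnesses the predicate-disagreement disjunct of $G$. Either way $P_{PO}$ has a solution.

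For the ($\Rightarrow$) direction I would take a shortest solution of $P_{PO}$ and read it as a plan $\pi$ from $s_\mathcal{I}$ for the original models; minimality forces $G$ to be first met only at the last step, leaving two cases. If it is met because $p_\psi$ was set at the last step $t$, the product semantics give that at steps $1,\dots,t-1$ both copies --- hence both $\m_i$ and $\m_j$ --- execute $\pi$, while at step $t$ exactly one can, so their executable prefixes have different lengths and $(s_\mathcal{I},\pi[1:t])$ is a distinguishing $\mathcal{Q}_{PO}$. Otherwise it is met by a predicate disagreement with $p_\psi$ false throughout, so both copies --- hence both models --- execute all of $\pi$ and end in states differing on the witnessing predicate, making $(s_\mathcal{I},\pi)$ distinguishing. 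In both cases $\m_i$ and $\m_j$ admit a distinguishing plan-outcome query.

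The step I expect to be the main obstacle is making the ``lockstep'' and ``mirroring'' claims rigorous in the precondition case $l = \emph{pre}$, because there the reduction does not work with $\m_i,\m_j$ directly but with the augmented $\m_i'',\m_j''$ obtained by introducing the fresh predicate $p_u$ (disjoined with the literal on $p$ in $a$'s precondition, added positively wherever $p$ is untouched and negatively wherever $p$ is an effect). One must check that this gadget (i) never lets $\m_i''$ or $\m_j''$ execute an action that the corresponding clean model cannot, before the step that probes $\gamma$, so that the synchronous run truly tracks the independent runs of $\m_i$ and $\m_j$; and (ii) never makes the two copies disagree except where $\m_i$ and $\m_j$ genuinely would, so that every $P_{PO}$ solution projects back to a bona fide distinguishing query for the clean models. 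Concretely I would argue that along any plan the value of $p_u$ is forced by the prefix, so that $\m_i''$ and $\m_j''$ reproduce the behavior of $\m_i$ and $\m_j$ exactly on the reachable states where $a$'s precondition is being tested, and then thread the first-divergence argument through this reduction. The case $l = \emph{eff}$ needs none of this: there the precondition of $a$ is identical in the two models, $p_\psi$ can never fire, and the witness is simply any state reachable from $s_\mathcal{I}$ in which $a$ is applicable, with the effect difference on $p$ surfacing directly in $G$.
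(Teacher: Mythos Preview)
Your proposal is correct and follows essentially the same approach as the paper's proof sketch: read $P_{PO}$ as a synchronous product of the two models, then argue each direction by the two-case split on whether the divergence is in applicability (triggering $p_\psi$) or in the resulting state (witnessing the predicate-disagreement disjunct of $G$). Your treatment is in fact more careful than the paper's, which glosses over the $p_u$ gadget entirely; your identification of the need to verify that $\m_i'',\m_j''$ faithfully mirror $\m_i,\m_j$ up to the first divergence is the right technical point to nail down, and the paper's sketch does not address it.
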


\begin{sproof}
The input to the planning problem $P_{PO}$ consists of an initial state
$s_\mathcal{I}$.
If the planner can solve $P_{PO}$ with initial state $s_\mathcal{I}$
to give a plan $\pi$, the distinguishing query is a
combination of $s_\mathcal{I}$ and $\pi$. Similarly, if $\m_i \dist^{\q_{PO}} \m_j$,
then giving the initial state $s_\mathcal{I}$ as part of planning problem $P_{PO}$, the
plan $\pi$ will be a solution which is part of $\q_{PO}$.
\end{sproof}

\subsection{Filtering Possible Models}
\label{sec:filter_mods}
This section describes the \textit{filter\_models()} module in
Algorithm \ref{alg:AIA} which takes as input $\m^\ag$, $\m_i$, $\m_j$,
and the query $\q$ (Sec.\,\ref{sec:query_gen}), and returns the
subset $\m_{prune}$ which is not consistent with $\m^\ag$.

First, AAM \emph{poses the query} $\q$ to $\m_i$, $\m_j$, and the
agent $\ag$. Based on the responses of all three, it determines if the two
models are prunable, i.e., $\m_i \prune \m_j$. As mentioned in
Def.\,\ref{def:prunable}, checking for prunability involves checking if
response to the query $\q$ by one of the models $\m_i$ or $\m_j$ is
consistent with that of the agent or not.
\begin{theorem}
\label{thm:consistency}

Let $\m_i, \m_j \in \{\m_+, \m_-, \m_\emptyset\}$ be the models generated
by adding the pal tuple $\gamma$ to $\m'$ which is an abstraction of the
true agent model $\m^\ag$. Suppose $\q = \langle s_{\mathcal{I}}^\q, \pi^\q \rangle$
is a distinguishing query for two distinct models $\m_i, \m_j$,
i.e. $\m_i \dist^{\q} \m_j$, and the response of models
$\m_i, \m_j,$ and $\m^\ag$ to the query $\q$ are
$\q(\m_i) = \langle \ell^i,\langle p^i_1,\dots, p^i_m \rangle \rangle$,
$\q(\m_j) = \langle \ell^j,\langle p^j_1,\dots, p^j_n \rangle \rangle$,
and
$\q(\m^{\ag}) = \langle \ell^\ag, \langle p^\ag_1,\dots,p^\ag_k\rangle \rangle$.
When $\ell^\ag = len(\pi^\q)$, $\m_i$ is not an abstraction of $\m^{\ag}$
if $len(\pi^\q) \not= \ell^i$ or $\{ p^i_1,\dots, p^i_m \}
\not\subseteq \{ p^\ag_1,\dots,p^\ag_k\}$.

\end{theorem}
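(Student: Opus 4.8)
The plan is to argue the contrapositive of the consistency definition (Def.\,\ref{def:consistency}): I will show that if $\m_i$ \emph{is} an abstraction of $\m^\ag$, then necessarily $\ell^i = len(\pi^\q)$ and $\{p^i_1,\dots,p^i_m\} \subseteq \{p^\ag_1,\dots,p^\ag_k\}$, under the hypothesis $\ell^\ag = len(\pi^\q)$. Taking the contrapositive of this implication then yields exactly the statement. The core fact I would establish first is a monotonicity lemma: if a model $M$ is an abstraction of $M'$ (i.e.\ $M = f_\lambda(M')$ possibly iterated, so $M \subseteq M'$ in the palm-tuple subset order), then for any plan $\pi$ and initial state $s_\mathcal{I}$, every action applicable under $M'$ is also applicable under $M$ (abstraction only removes preconditions), and the state reached by $M$ after $k$ steps contains all literals entailed by $M'$ after $k$ steps restricted to the retained predicates — more precisely, $M$'s trajectory ``dominates'' $M'$'s. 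Hence $\ell^M \ge \ell^{M'}$ and the final state of $M$ is a superset (in the relevant sense) of that of $M'$.

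Given that lemma, the argument runs as follows. Suppose $\m_i$ is an abstraction of $\m^\ag$. By the monotonicity lemma applied along the plan $\pi^\q$ from $s_\mathcal{I}^\q$, we get $\ell^i \ge \ell^\ag$. But $\ell^\ag = len(\pi^\q)$ by hypothesis and $\ell^i \le len(\pi^\q)$ always (the response length cannot exceed the plan length), so $\ell^i = len(\pi^\q) = \ell^\ag$. Next, since both $\m_i$ and $\m^\ag$ execute the full plan, the lemma gives that $\m_i$'s final state, projected onto the predicates both models share, is a superset of $\m^\ag$'s final state entailments; combined with the fact that $\m_i$'s vocabulary of resolved predicates is contained in that of $\m^\ag$ (as $\m_i \subseteq \m^\ag$ adds no predicates beyond those in $\m^\ag$), the set of literals $\{p^i_1,\dots,p^i_m\}$ reported by $\m_i$ must be contained in $\{p^\ag_1,\dots,p^\ag_k\}$. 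Taking the contrapositive — if $\ell^i \ne len(\pi^\q)$ or the subset relation fails, then $\m_i$ is not an abstraction of $\m^\ag$ — is the claim.

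The main obstacle I anticipate is making the monotonicity lemma precise for \emph{abstract} models, where some palm tuples are entirely absent rather than assigned $\emptyset$: one must be careful that ``executing'' an abstract model means treating absent preconditions as vacuously satisfied and absent effects as no-ops, and that the final-state comparison in Def.\,\ref{def:consistency} only involves predicates that are actually resolved in $\m_i$. A secondary subtlety is handling the case where $\m^\ag$'s own execution could in principle be shorter than the plan; the hypothesis $\ell^\ag = len(\pi^\q)$ is exactly what rules this out and lets the length comparison collapse to an equality, so I would flag clearly where that hypothesis is used. The rest is a routine induction on plan-prefix length, so I would not belabor it.
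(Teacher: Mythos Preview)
Your monotonicity lemma is false as stated, and the gap is not merely a matter of being ``careful'' about semantics. Abstraction does \emph{not} only remove preconditions: a palm tuple $\langle p,a,\emph{eff},m\rangle$ with $m\in\{+,-\}$ is an effect, and removing it changes the trajectory. Concretely, suppose $\m^\ag$ contains $\langle p,a_1,\emph{eff},-\rangle$ (a delete effect) which is absent from the abstraction $\m_i$. Starting from a state with $p$ true, after $a_1$ the agent's state no longer contains $p$ but $\m_i$'s state still does; so $\m_i$'s final state is \emph{not} a subset of $\m^\ag$'s, yet $\m_i$ is a perfectly good abstraction of $\m^\ag$. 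Dually, if $\m^\ag$ has an add effect that $\m_i$ lacks, a later precondition present in both models may be satisfied for $\m^\ag$ but not for $\m_i$, so $\ell^i < \ell^\ag$ is possible even when $\m_i$ abstracts $\m^\ag$. Thus neither the length inequality nor the state-subset direction you want follows from abstraction alone, for arbitrary plans.

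The reason the theorem is nevertheless true is that $\q$ is not an arbitrary distinguishing query: it is the query produced by the $P_{PO}$ construction with the dummy predicate $p_u$, and that construction is precisely what blocks the counterexamples above. The paper exploits this through three structural facts specific to the generated query: (i) when $\m'$ is empty and $l=\emph{eff}$, no distinguishing query exists at all; (ii) the last action of $\pi^\q$ is the action $a$ appearing in $\gamma$; and (iii) at the penultimate step both $\m_i$ and $\m_j$ reach the same state, and that state (modulo $p_u$) is a subset of the agent's state. With (ii) and (iii) in hand, the proof reduces to a case analysis on $l\in\{\emph{pre},\emph{eff}\}$ and on the truth of $p$ in the penultimate state. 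Your contrapositive framing is fine, but you must replace the general monotonicity lemma with these query-specific facts (or re-derive them); a plan-prefix induction that ignores the $p_u$ mechanism will not go through.
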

\begin{sproof}
Proving by induction, the base case is adding a single pal tuple
$\langle p,a,l \rangle$ to an empty model (which is a consistent
abstraction of $\m^\ag$) resulting in 3 models. The 2 models pruned
based on Def.\,\ref{def:consistency} can be shown to be inconsistent
with $\m^\ag$, leaving out the one consistent model. For the
inductive step, it can be shown that after adding a pal tuple to a
consistent model it is not consistent with $\m^\ag$ only if it does
not execute the full plan (the precondition is inconsistent), or
if the end state reached by the model is not a subset of the state
of the agent (the effect is inconsistent).
\end{sproof}

If the models are prunable, then the palm tuple being added in the
inconsistent model cannot appear in any model consistent with
$\mathcal{A}$. As we discard such palm tuples at abstract levels
(as depicted in Fig.\,\ref{fig:lattice_iaa} (a)), we prune out a
large number of models down the lattice (as depicted in
Fig.\,\ref{fig:lattice_iaa} (c)), hence we keep the intractability
of the approach in check and end up asking less number of queries.

\subsection{Updating PAL ordering}
\label{sec:update_ordering}

This section describes the \textit{update\_pal\_ordering()} module
in AIA (line 15). It is called when the query generated by
\textit{generate\_query()} module is not executable by $\ag$, i.e.,
$len(\pi^\q) \not= \ell^\ag$. E.g.,
consider two abstract models $\m_2$ and $\m_3$ being considered by
AAM (Fig.\,\ref{fig:load_truck}). At this level of abstraction,
AAM does not have knowledge of the predicate $at(?p,?l)$, hence
it will generate a plan outcome query with initial state
$at(?t,?l)$ and plan $\emph{load\_truck}(p1,t1,l1)$
to distinguish between $\m_2$ and $\m_3$. But this cannot be
executed by the agent $\ag$ as its precondition $at(?p,?l)$ is not
satisfied, and hence we cannot discard any of the models.

Recall that in response to the plan outcome query we get the failed
action $a_{\f}$ $= \pi{[\ell\!+\!1]}$ and the final state $s_{\f}$.
Since the query plan $\pi$ is generated using $\m_i$ and $\m_j$
(which differ only in the newly added palm tuple), they both
would reach the same state $\overline{s}_{\f}$ after executing
first $\ell$ steps of $\pi$. Thus, we search $\mathbb{S}$ for a
state $s \supset \overline{s}_{\f}$ where $\ag$ can execute $a_{\f}$. 
Similar to \citet{Stern2017}, we
infer that any predicate which is false
in $s$ will not appear in $a_{\f}$'s precondition
in the positive mode.
Next, we iterate through the set of predicates $p' \subseteq s \setminus
\overline{s}_{\f}$ and add them to $\overline{s}_{\f}$ to check if
$\ag$ can still execute $a_{\f}$.  Thus, on adding a predicate $p \in p'$ to
the state $\overline{s}_{\f}$,
if $\ag$ cannot execute $a_{\f}$, we add $p$ in negative mode
in $a_{\f}$'s precondition, otherwise in $\emptyset$ mode. All pal
tuples whose modes are correctly inferred in this way are therefore
removed from the pal ordering.

\subsubsection{Equivalent Models} It is possible for AIA to encounter a
pair of models $\m_i$ and $\m_j$ that are not prunable.
In such cases, the models $\m_i$ and $\m_j$ are functionally
equivalent and none of them can be discarded. Hence, both the
models end up in the set $poss\_models$ in line 18 of AIA.

\subsection{Correctness of Agent Interrogation Algorithm}
In this section, we prove that the set of estimated models returned
by AIA is correct and the returned models are functionally
equivalent to the agent's model, and no correct model is discarded
in the process.

\begin{theorem}
\label{thm:terminate}
The Agent Interrogation Algorithm (algorithm \ref{alg:AIA}) will
always terminate and return a set of models, each of which are
functionally equivalent to the agent's model $\m^\ag$.
\end{theorem}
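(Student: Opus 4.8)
The plan is to establish two things: (i) termination, and (ii) correctness of the returned set, i.e. every model in \(poss\_models\) at termination is functionally equivalent to \(\m^\ag\), and no model functionally equivalent to \(\m^\ag\) is ever discarded. I would organize the argument around the invariant that, at the start of each iteration of the main loop (line 4), \(poss\_models\) is exactly the node label \(\ell_N(n)\) of the current lattice node \(n\) minus the palm tuples pruned so far, and crucially that this set always contains at least one abstraction of \(\m^\ag\).

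For termination, I would argue as follows. Each iteration of the outer loop consumes one pal tuple \(\gamma\) from the ordering \(\Gamma\); by the remark after Algorithm 1 there are \(2\times|\mathbb{P}^*|\times|\mathbb{A}|\) pal tuples total, which is finite. The only subtlety is line 15, \textit{update\_pal\_ordering()}, which is invoked when the generated query is not executable by \(\ag\); I need to argue that this does not cause nontermination. Here I would invoke the mechanism of Sec.\ 2.5: when the query fails, AAM uses the failed action \(a_\f\) and the witnessing state \(s \supset \overline{s}_\f\) from \(\mathbb{S}\) to \emph{directly} resolve the mode of one or more pal tuples for \(a_\f\)'s precondition, and those resolved pal tuples are \emph{removed} from the ordering. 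So every call to \textit{update\_pal\_ordering()} strictly shrinks the remaining pal ordering (or we can fold it into the same termination measure: total number of unresolved pal tuples strictly decreases each outer iteration, whether by line 9--18 or by line 15). Hence the outer loop runs at most \(O(|\mathbb{P}^*|\times|\mathbb{A}|)\) times; the inner loops (lines 7--8) range over finite sets (\(new\_models\) is finite, and the \(\binom{3}{2}=3\) mode pairs); \textit{generate\_query()} terminates because it iterates over the finite set \(\mathbb{S}\) and each planner call terminates (decidability of classical planning with the given finite fluent/action sets). So AIA terminates.

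For correctness, I would proceed by induction on the outer-loop iterations, maintaining the invariant: \emph{after processing pal tuple \(\gamma\), \(poss\_models = \ell_N(n_{cur}) \setminus (\text{pruned palm tuples})\), and this set contains every restriction-to-resolved-pal-tuples of \(\m^\ag\) and no model whose resolved part contradicts \(\m^\ag\)}. The base case is line 3: \(poss\_models = \{\phi\}\), the empty model, which is vacuously a consistent abstraction of \(\m^\ag\). For the inductive step: when \(\gamma\) is processed, line 18 forms \(new\_models \times \{\gamma^+,\gamma^-,\gamma^\emptyset\}\) and removes \(pruned\_models\). Theorem~\ref{thm:plan_outcome} guarantees \textit{generate\_query()} returns a genuine distinguishing query whenever one exists, and Theorem~\ref{thm:consistency} guarantees that the models \textit{filter\_models()} prunes are exactly those \emph{not} abstractions of \(\m^\ag\) — so the unique \(\gamma\)-variant consistent with \(\m^\ag\) survives, and the two inconsistent variants of that pal tuple are removed from every model in \(new\_models\) (this is the "prune 3, 9, \(\dots\) models down the lattice" effect of Fig.\ 2(c)). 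The line-15 branch is handled by the Sec.\ 2.5 argument: the resolved mode is provably correct (a predicate false in a state where \(\ag\) still executes \(a_\f\) cannot be a positive precondition, etc.), so again no abstraction of \(\m^\ag\) is lost. When the ordering is exhausted, every pal tuple is resolved, so each surviving model is a \emph{concrete} model agreeing with \(\m^\ag\) on every resolved pal tuple that was ever distinguishable — i.e., functionally equivalent to \(\m^\ag\) (two models differing only on pal tuples that are never distinguishable by any query in \(\mathbb{Q}_{PO}\) are functionally equivalent by the definition of distinguishability; these are precisely the "Equivalent Models" of Sec.\ 2.5 that both land in \(poss\_models\)). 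Conversely, any model functionally equivalent to \(\m^\ag\) is never distinguished from an abstraction of \(\m^\ag\), hence never pruned, so it is in the returned set.

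The main obstacle I anticipate is the line-15 case and its interaction with the lattice invariant. One has to be careful that when \textit{update\_pal\_ordering()} resolves several pal tuples "out of order," the set \(poss\_models\) is updated consistently — i.e., that we may legitimately intersect the current model set with the newly learned constraints without discarding some abstraction of \(\m^\ag\), and that the bookkeeping of "which node of the lattice are we at" still makes sense when the edge order changes dynamically. The paper's own phrasing ("the correctness of the algorithm does not depend on this ordering") is exactly the lemma-level fact that needs to be nailed down: pruning decisions are sound regardless of the order in which pal tuples are resolved, because each pruning step (whether via \textit{filter\_models()} backed by Theorem~\ref{thm:consistency}, or via the failed-action analysis of Sec.\ 2.5) removes only palm tuples that provably cannot occur in \(\m^\ag\). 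Once that order-independence is established, termination plus the surviving-abstraction invariant combine to give the theorem.
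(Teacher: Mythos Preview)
Your proposal is correct and follows essentially the same route as the paper: invoke Theorem~\ref{thm:plan_outcome} and Theorem~\ref{thm:consistency} to show that only models inconsistent with $\m^\ag$ are ever pruned, handle the failed-query branch via the \textit{update\_pal\_ordering()} mechanism, and argue termination from the finite, strictly decreasing set of unresolved pal tuples. Your write-up is actually more explicit than the paper's sketch---you spell out the loop invariant, the functional-equivalence argument for the surviving models, and correctly flag the order-independence of pruning as the delicate point---but the underlying decomposition is the same.
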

\begin{sproof}
Theorem \ref{thm:plan_outcome} and Theorem \ref{thm:consistency}
prove that whenever we get a prunable query, we discard only
inconsistent models, thereby ensuring that no correct model is
discarded. When we do not get a prunable query, we infer the
correct precondition of the failed action using
\textit{update\_pal\_ordering()}, hence the number of refined
palm tuples always increases with the number of iterations of
AIA, thereby ensuring its termination in finite time.
\end{sproof}

{\section{Empirical Evaluation}
\label{sec:experiments}}

\begin{table}[t]
    \footnotesize
    \centering
    \footnotesize
    \begin{tabular}{@{}|l|c|c|c|c|c|@{}}
        \hline
        \textbf{Domain} & $\mathbf{|\mathbb{P}^*|}$ &$\mathbf{|\mathbb{A}|}$ & $\mathbf{|\hat{\q}|}$ & $\mathbf{t_\mu}$ \textbf{(ms)} & $\mathbf{t_\sigma}$ \textbf{($\mathbf{\mu}$s})\\
        \hline
        \hline
        Gripper      & 5   & 3  & 17  & 18.0 & 0.2 \\
        \hline
        Blocksworld & 9   & 4  & 48  & 8.4 & 36\\
        \hline
        Miconic     & 10  & 4  & 39  & 9.2 & 1.4\\
        \hline
        Parking      & 18  & 4  & 63  & 16.5 & 806\\
        \hline
        Logistics    & 18  & 6  & 68  & 24.4 & 1.73\\
        \hline
        Satellite    & 17  & 5  & 41  & 11.6 & 0.87\\
        \hline
        Termes       & 22  & 7  & 134 & 17.0 & 110.2\\
        \hline
        Rovers      & 82   & 9  & 370  & 5.1 &  60.3 \\
        \hline
        Barman       & 83   & 17  & 357  & 18.5  & 1605 \\
        \hline
        Freecell     & 100 & 10 & 535 & 2.24$^\dagger$ & 33.4$^\dagger$ \\
        \hline
    \end{tabular}
    \caption{The number of queries ($|\hat{\q}|$), average time per query ($t_\mu$), and variance of time per query ($t_\sigma$)  generated by AIA with FD. Average and variance are calculated for 10 runs of AIA, each on a separate problem.
    {\small $^\dagger$Time in sec. }}
    \label{tab:domain_var}
\end{table}

We implemented AIA in Python to evaluate the efficacy of our
approach.\footnote{Code available at https://git.io/Jtpej} In this
implementation, initial states ($\mathbb{S}$, line 1 in Algorithm
\ref{alg:AIA}) were collected by making the agent perform random walks in
a simulated environment. We used a maximum of 60 such random initial
states for each domain in our experiments. The implementation assumes that
the domains do not have any constants and that actions and predicates do
not use repeated variables (e.g., $at(?v, ?v))$, although these assumptions
can be removed in practice without affecting the correctness of algorithms.
The implementation is optimized to store the agent’s answers to queries;
hence the stored responses are used if a query is repeated.

We tested AIA on two types of agents: symbolic agents that use models from
the IPC (unknown to AIA), and simulator agents that report states
as images using PDDLGym. We wrote image classifiers for each predicate for
the latter series of experiments and used them to derive state
representations for use in the AIA algorithm. All experiments were executed
on 5.0 GHz Intel i9-9900 CPUs with 64 GB RAM running Ubuntu 18.04.

The analysis presented below shows that AIA learns the correct model with
a reasonable number of queries, and compares our results with the closest
related work, FAMA~\cite{aineto2019learning}. We use the metric of
\textit{ model accuracy} in the following analysis: the number of
correctly learnt palm tuples normalized with  the total number of palm
tuples in $\m^\ag$.

\begin{figure}[t]
    \centering
    \includegraphics[width=\columnwidth]{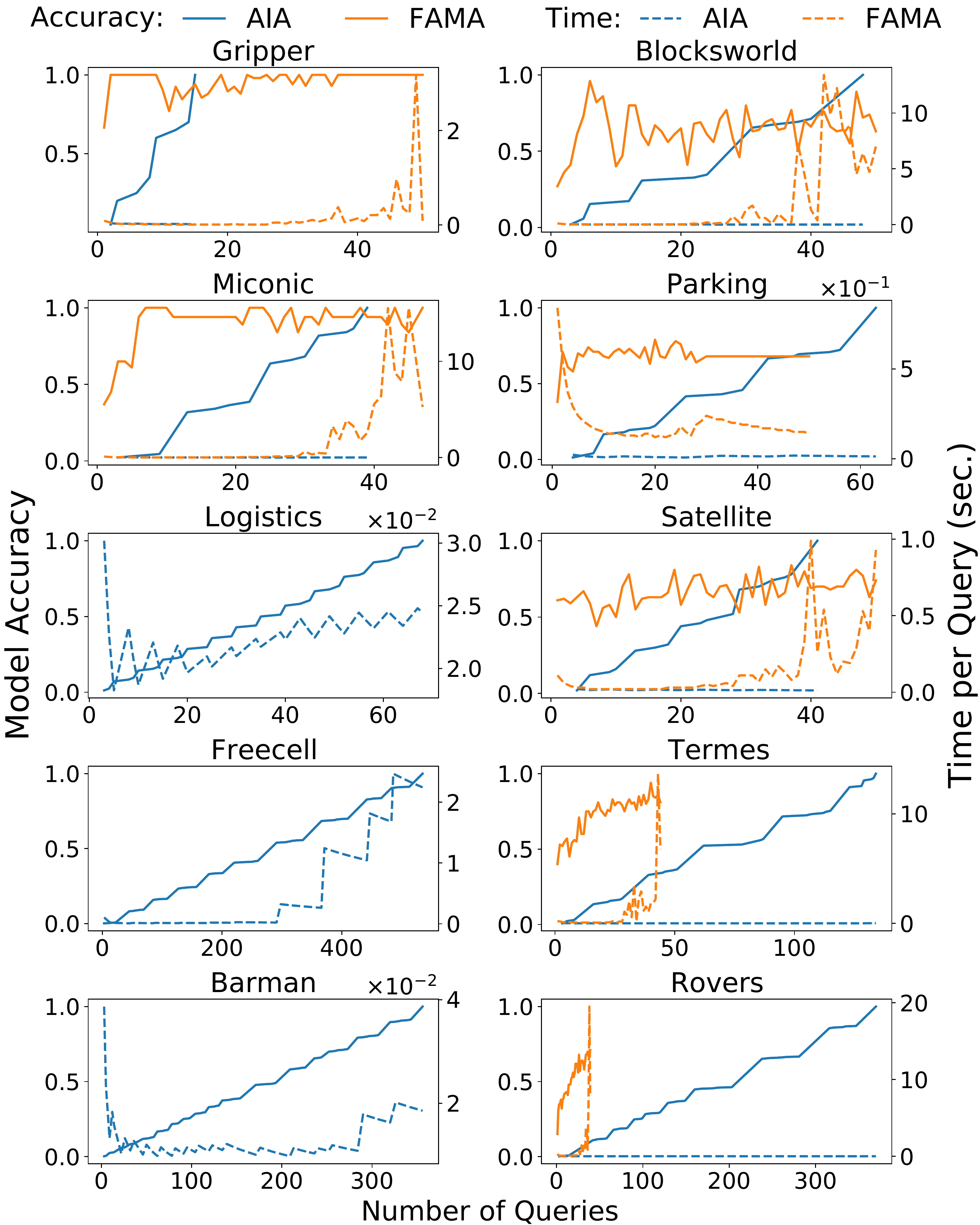}
    \caption{Performance comparison of AIA and FAMA in terms of model
    accuracy and time taken per query with an increasing number of queries.}
    \label{fig:graphs}
\end{figure}

\subsubsection{Experiments with symbolic agents} We initialized the agent
with one of the 10 IPC domain models, and ran AIA on the resulting agent.
10 different problem instances were used to obtain average performance
estimates.

Table \ref{tab:domain_var} shows that the number of queries required
increases with the number of predicates and actions in the domain. We used
Fast Downward~\cite{Helmert06thefast} with LM-Cut
heuristic~\cite{Helmert2009LandmarksCP} to solve the planning problems.
Since our approach is planner-independent, we also tried using
FF~\cite{hoffmann2001} and the results were similar. The low variance shows
that the method is stable across multiple runs.

\subsubsection{Comparison with FAMA} We compare the performance of AIA with
that of FAMA in terms of stability of the models learnt and the time taken
per query. Since the focus of our approach is on automatically generating
useful traces, we provided FAMA randomly generated traces of length 3 (the
length of the longest plans in AIA-generated queries) of the form used
throughout this paper ($\langle s_\mathcal{I},a_1,a_2,a_3,s_G\rangle$).

Fig. \ref{fig:graphs} summarizes our findings. AIA takes lesser time per
query and shows better convergence to the correct model. FAMA sometimes
reaches nearly accurate models faster, but its accuracy continues to
oscillate, making it difficult to ascertain when the learning process
should be stopped (we increased the number of traces provided to FAMA until
it  ran out of memory). This is because the solution to FAMA's internal
planning problem introduces spurious palm tuples in its model if the input
traces do not capture the complete domain dynamics. For Logistics,
FAMA generated an incorrect planning problem, whereas for Freecell and
Barman it ran out of memory (AIA also took considerable time for Freecell).
Also, in domains with negative preconditions like Termes, FAMA was
unable to learn the correct model. We used
Madagascar~\cite{rintanen2014madagascar} with FAMA as
it is the preferred planner for it. We also tried FD and FF with FAMA, but
as the original authors noted, it could not scale and ran out of memory on
all but a few Blocksworld and Gripper problems where it was much slower
than with Madagascar.

\begin{figure}[t]
    \centering
    \begin{subfigure}[t]{.35\columnwidth}
    \centering
    \includegraphics[width=\linewidth]{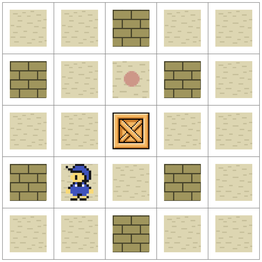}
  \end{subfigure}
  \quad
  \begin{subfigure}[t]{.35\columnwidth}
    \centering
    \includegraphics[width=\linewidth]{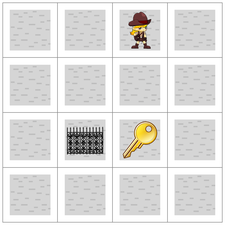}
  \end{subfigure}
  \caption{PDDLGym's simulated Sokoban (left) and Doors (right) 
  environments used for the experiments.}
  \label{fig:sim}
\end{figure}

\subsubsection{Experiments with simulator agents} AIA can also be used with
simulator agents that do not know about predicates and report states as
images. To test this, we wrote classifiers for detecting predicates from
images of simulator states in the PDDLGym~\cite{silver2020pddlgym}
framework. This framework provides ground-truth PDDL models, thereby
simplifying the estimation of accuracy. We initialized the agent with one
of the two PDDLGym environments, Sokoban and Doors shown in
Fig.\,\ref{fig:sim}. AIA inferred the correct model in both cases and the
number of instantiated predicates, actions, and the average number of
queries (over 5 runs) used to predict the correct model for Sokoban were
35, 3, and 201, and that for Doors were 10, 2, and 252.

\section{Conclusion} \label{sec:conclusion}

We presented a novel approach for efficiently learning the internal model of
an autonomous agent in a STRIPS-like form through query answering.
Our theoretical and empirical results showed that the approach works well for
both symbolic and simulator agents.

Extending our predicate classifier to handle noisy state detection, similar
to prevalent approaches using classifiers to detect symbolic
states~\cite{Konidaris14,asai2017classical} is a good direction for future
work. Some other promising extensions include replacing query and response
communication interfaces between the agent and AAM
with a natural language similar to \citet{Lindsay2017}, or learning other
representations like \citet{Zhuo14learnHTN}.

\section*{Acknowledgements} \label{sec:ack}
We thank Abhyudaya Srinet for his help with the implementation. This work was
supported in part by the NSF under grants IIS 1844325, IIS 1942856, and OIA 1936997.

\section*{Ethics Statement}
\label{sec:ethics}

Learning the internal model of an AI agent is one of the main focus areas of
the AI community in the recent past. This work would enable a layperson to
assess such autonomous agents and to verify if they are safe to work with.
This would increase the adaption rate of AI systems, as it would remove the
dependence of systems using AI on experts who could verify the internal working
of the agent.

Our system asks the agent queries and assumes that the agent can be connected
to a simulator to ensure the correctness of responses. Our approach for such
model learning comes with soundness and completeness guarantees. This implies
that it will find the agent model if there exists one, and the model that it
learns will be correct as per the simulations. As in any approach that
uses simulators, this method is susceptible to errors in programming and
simulator design. This can be addressed independently through research on
formal verification of simulators used in AI.

\bibliography{aia_aaai}

\cleardoublepage
\appendix
\setcounter{theorem}{0}

\section{Proofs}

\begin{theorem}
\label{thm:plan_outcome1}
Given a pair of models $\mathcal{M}_i$ and $\mathcal{M}_j$, the planning
problem $P_{PO}$ has a solution iff $\mathcal{M}_i$ and $\mathcal{M}_j$ have a
distinguishing plan outcome query $\mathcal{Q}_{PO}$.
\end{theorem}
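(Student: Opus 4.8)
The plan is to prove the two directions separately, using the construction of $P_{PO} = \langle \m^{PO}, s_\mathcal{I}, G\rangle$ given in Section~\ref{sec:query_gen}. The key observation tying everything together is a \emph{simulation invariant}: for any action sequence $\pi = \langle a_1,\dots,a_k\rangle$ executed in $\m^{PO}$ from a state of the form $s^{\m_i''} \wedge s^{\m_j''}$ (with $p_\psi$ false), as long as $p_\psi$ has not been set, the $\m_i''$-copy of the predicates evolves exactly as $\pi$ would evolve under $\m_i''$, and likewise for the $\m_j''$-copy --- this is immediate from the first conditional effect, which fires precisely when both preconditions hold and then applies both effects on their respective disjoint predicate copies. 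Moreover $p_\psi$ becomes true at step $t$ iff at that point exactly one of $\m_i'', \m_j''$ can apply $a_t$, i.e. iff $\pi[1{:}t]$ is the shortest prefix on which the two models' executability diverges.

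For the forward direction ($P_{PO}$ solvable $\Rightarrow$ distinguishing query exists), I would take a solution plan $\pi$ for $P_{PO}$ from $s_\mathcal{I}$ and show $\q_{PO} = \langle s_\mathcal{I}, \pi\rangle$ distinguishes $\m_i$ and $\m_j$. Since $G = \exists p\,(p^{\m_i''} \wedge \neg p^{\m_j''}) \vee (\neg p^{\m_i''} \wedge p^{\m_j''}) \vee p_\psi$ holds at the end, either $p_\psi$ got set --- in which case by the invariant the two models diverge in executability on some prefix, so $\q_{PO}(\m_i)$ and $\q_{PO}(\m_j)$ differ in the length component $\ell$ --- or some predicate copy differs while $p_\psi$ stayed false, in which case by the invariant $\pi$ is fully executable by both $\m_i''$ and $\m_j''$ and the resulting states differ, so the $s_\mathcal{F}$ components differ. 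In either case $\q_{PO}(\m_i) \neq \q_{PO}(\m_j)$, hence $\m_i \dist^{\q_{PO}} \m_j$. (I would also note the bookkeeping point that the dummy predicate $p_u$ machinery used when $\lambda_l = \emph{pre}$ does not affect this argument, since $p_u$ only relaxes a precondition and the divergence is still detected through $\ell$ or $s_\mathcal{F}$.)

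For the converse, suppose $\m_i \dist^{\q_{PO}} \m_j$ via some $\q_{PO} = \langle s_\mathcal{I}, \pi\rangle$; I want $\pi$ (or a prefix) to solve $P_{PO}$ from $s_\mathcal{I}$. By definition of distinguishing, $\q_{PO}(\m_i) \neq \q_{PO}(\m_j)$, so either the executable lengths differ or the final states differ. If the lengths differ, let $t$ be the first step where one model fails and the other does not; then running $\pi[1{:}t]$ in $\m^{PO}$ keeps the invariant up to step $t{-}1$, and at step $t$ exactly one of the two precondition copies holds, firing the second conditional effect and setting $p_\psi$, so $G$ is satisfied. If the lengths are equal but the final states differ, then $\pi$ is executable in both copies throughout, the first conditional effect fires at every step, and at the end some predicate copy differs, again satisfying $G$. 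Either way $P_{PO}$ has a solution.

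The main obstacle I expect is \emph{not} the logical skeleton above but the careful handling of the precondition case $\lambda_l = \emph{pre}$ together with the auxiliary predicate $p_u$: one must check that adding $p_u$ in disjunction to the precondition of $p$, and then forcing $p_u$ true/false across the other actions as specified in the $\m_m''$ construction, faithfully preserves the property that $\m_i''$ and $\m_j''$ behave like $\m_i$ and $\m_j$ \emph{on reachable states} while still exposing the $\gamma$-induced difference --- the paper explicitly warns that "without this modification, an inconsistent abstract model may have a response consistent with $\ag$," so the equivalence genuinely relies on it. A secondary subtlety is making the simulation invariant precise enough to cover the $\exists p$ quantifier in $G$ (it ranges over the shared predicate index set, so "$p^{\m_i''}$'' and "$p^{\m_j''}$'' refer to the two copies of the same ground predicate), and confirming that no other action effect can set $p_\psi$ spuriously or unset a divergence once created.
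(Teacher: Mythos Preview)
Your proposal is correct and follows essentially the same approach as the paper's proof sketch: both argue via the case split on whether the divergence manifests as a precondition mismatch (setting $p_\psi$, hence different $\ell$) or an effect mismatch (different final predicate copies, hence different $s_\mathcal{F}$), and both take the same $s_\mathcal{I}$ and $\pi$ to pass between $P_{PO}$ and $\q_{PO}$. Your explicit simulation invariant and your flagging of the $p_u$ bridging between $\m_i,\m_j$ and $\m_i'',\m_j''$ actually make your write-up more careful than the paper's own sketch, which leaves both points implicit.
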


\begin{sproof}
    $\mathcal{Q}_{PO}$ comprises of an initial state $s_\mathcal{I}$ and plan 
    $\pi$. The initial state $s_\mathcal{I}$ in $\mathcal{Q}_{PO}$ and $P_{PO}$
    is same. Starting with this initial state, an action becomes a part of the
    plan $\pi$ only when it can be applied in any one or both of the models
    $\mathcal{M}_i$ and $\mathcal{M}_j$. Two cases arise here: 1) if the action can
    be executed in both the models, the effect of both the actions is applied to
    the state and the next action is searched; 2) if the action is
    applicable in only one of the two models, the effect of the
    action is a dummy proposition $p_\psi$ which is also the goal. Thus, as soon
    as an action is executed, that: is either applicable in one of the models but 
    not the other; or 
    gives different resulting states in both the models is found, the resulting
    plan becomes the plan needed by the query $\mathcal{Q}_{PO}$. Hence if the 
    planning problem $P_{PO}$ gives a solution plan $\pi$, then there exists a 
    query $\mathcal{Q}_{PO}$ that consists of $s_\mathcal{I}$ and $\pi$ as input.
    \par
    Moreover, as described previously, whenever there exists a distinguishing
    plan-outcome query, the starting state $s_{\mathcal{I}}$ is part of $\q_{PO}$,
    and the way we generate the $P_{PO}$ problem ensures we will get a plan $\pi$
    as the solution.
\end{sproof}

To prove our next theorem, we will need some additional lemmas that we prove
below:

\begin{lemma}
\label{lem:0pre}
For an empty model $\m$ having 0 palm tuples, when concretizing with a new pal
tuple, a distinguishing query can be found only if $l = \emph{pre}$ in pal tuple.
\end{lemma}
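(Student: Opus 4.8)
The plan is to prove the contrapositive: whenever the new pal tuple has $l = \emph{eff}$, the three concretizations $\m \cup \{\gamma^+\}$, $\m \cup \{\gamma^-\}$, $\m \cup \{\gamma^\emptyset\}$ of the empty model $\m$ admit no distinguishing plan outcome query, so a distinguishing query can exist only when $l = \emph{pre}$. I would begin by recording the one structural fact available about $\m$: it contains no palm tuples, so every action of $\m$ has an empty precondition and empty effect, and $\m \cup \{\gamma^\emptyset\}$ is semantically just $\m$ again; adding the single tuple $\gamma = \langle p, a, l\rangle$ alters exactly one action, at exactly one location, with respect to exactly one instantiated predicate. I would then split on $l$.

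For $l = \emph{eff}$: since adding an effect tuple leaves every precondition empty, each of the three models executes any query plan $\pi$ in full from any state $s_{\mathcal I}$, so the length component of the response is $\ell = len(\pi)$ for all three; moreover the only effect occurring anywhere in any of these models is ``$a$ changes (a grounding of) $p$'', so the final state of each model can differ from $s_{\mathcal I}$ at most in that grounding of $p$. I would then invoke Theorem~\ref{thm:plan_outcome}: a distinguishing query exists iff the reduction $P_{PO}$ built from a pair of the concretizations has a solution. With $\m$ empty, every action of $\m_i''$ and $\m_j''$ has an empty precondition, so the $p_\psi$-branch of the goal of $P_{PO}$ (which fires only when exactly one of the two models can execute an action) is unreachable; the remaining task is to show the state-difference branch of the goal is unreachable too, using the semantics by which the concretizations report their final states. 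This is the step that carries the argument, and it is where I expect the main difficulty: one must pin down precisely how an abstract model's response records the grounding of $p$ touched by $a$, and argue from that that no query makes either the $\ell$-component or the $s_{\mathcal F}$-component differ across the three models.

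For $l = \emph{pre}$ (not needed for the stated ``only if'', but natural for context and useful in the sequel): here I would simply exhibit a distinguishing query, which is the easy direction. Take any $s_{\mathcal I}$ in which the relevant grounding of $p$ holds and a plan $\pi$ whose first action is $a$, grounded so that its precondition predicate instantiates to that grounding; then $\m \cup \{\gamma^-\}$ stalls at step $0$ while $\m \cup \{\gamma^+\}$ and $\m \cup \{\gamma^\emptyset\}$ proceed, giving different $\ell$ and hence $\q(\m\cup\{\gamma^+\}) \neq \q(\m\cup\{\gamma^-\})$; symmetrically, a state in which that grounding of $p$ is false yields another distinguishing query separating $\m\cup\{\gamma^+\}$ from $\m\cup\{\gamma^-\}$ (and from $\m\cup\{\gamma^\emptyset\}$). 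Equivalently, $P_{PO}$ reaches its $p_\psi$-branch after the single action $a$. The real work of the lemma, and its only obstacle, is therefore the $l = \emph{eff}$ half above.
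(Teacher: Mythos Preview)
Your argument misreads the construction of $\m_i''$ and $\m_j''$. You assert that ``with $\m$ empty, every action of $\m_i''$ and $\m_j''$ has an empty precondition,'' but this is exactly where the dummy predicate $p_u$ enters. In the paper's query-generation step, $p_u$ is added in positive mode to every $\langle a',l'\rangle$ that is not already fixed in $\m'$; since $\m'=\{\}$, that means $p_u$ is a positive precondition of \emph{every} action in both $\m_i''$ and $\m_j''$. The special treatment when $l=\emph{pre}$ is that the precondition of the distinguished action $a$ becomes $p\lor p_u$ rather than just $p_u$. The initial state never contains $p_u$.

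With this corrected, the $l=\emph{eff}$ case collapses to a one-line observation: every action of $\m_i''$ and $\m_j''$ has precondition $p_u$, which is absent from $s_{\mathcal I}$, so no action of $P_{PO}$ is applicable and the planning problem has no solution; hence (by Theorem~\ref{thm:plan_outcome}) no distinguishing query is found. There is no need to reason about final states at all, and the ``main difficulty'' you anticipate never arises. In fact that difficulty is unresolvable on your reading: if the preconditions really were empty, the three effect variants \emph{would} produce different final states (execute $a$ once from a state where the relevant grounding of $p$ is false; $\m_+$ ends with $p$ true while $\m_-$ and $\m_\emptyset$ do not), so they would be distinguishable and the lemma would be false. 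The $p_u$ mechanism is therefore not incidental but the entire content of the proof.

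Your treatment of the $l=\emph{pre}$ direction is fine in spirit, though note again that in $\m_i''$ the precondition of $a$ is $p\lor p_u$ (resp.\ $\neg p\lor p_u$) rather than just $p$; since $p_u$ is false initially, your single-step plan still works.
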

\begin{proof}
Recall that we add the pal tuple
$\gamma = \langle p,a,l \rangle$ in modes $i$ and $j$ to $\m'$ to get $\m_i'$
and $\m_j'$ as follows:\\ 
If the location $l = \emph{eff}$, we add the palm tuple normally 
to $\m'$, i.e., $\m_m' = \m' \cup \langle p , a, l, m \rangle$, where 
$m \in \{i,j\}$. If $l = \emph{pre}$, we add a dummy predicate 
$p_u$ in disjunction with the predicate $p$ to the precondition of both 
the models.
We then modify the models $\m_i'$ and $\m_j'$ 
further in the following way:
\begin{align*}
    \m_m'' = \m_m' \cup \, &\{\langle p_u, a', l',+ \rangle : \forall a',l' \, \, \langle a',l' \rangle \not\in \\ &\{ \!\langle a^*\!\!,l^*\!\rangle\!\!: \exists m^* \, \langle p,a^*,l^*,m^*\rangle \in \m'\}\}\\
    \cup \, &\{\langle p_u, a', l',- \rangle : \forall a',l' \, \, \langle a',l' \rangle \in \\
    &\{ \!\langle a^*\!\!,l^*\!\rangle\!\!: \!l^*\!\!= \!\emph{eff} \land\! \exists m^* \, \!\langle p,a^*,l^*,m^*\rangle \!\!\in\! \m'\}\}
\end{align*}

Since $\m' = \{ \}$, if pal tuple $\gamma = \langle p,a,l=\emph{eff}\rangle$, for
all actions in the model the precondition will be $p_u$. Since $p_u$ is not
present in the initial state, no action is executable.

Whereas if $\m' = \{ \}$, and pal tuple being added is $\gamma = \langle
p,a,l=\emph{pre}\rangle$, precondition of all actions except $a$ will be $p_u$,
and precondition of $a$ will be $p \lor p_u$. Hence, a distinguishing query is
possible such that the plan $\pi$ of the distinguishing query contains $a$.
\end{proof}

\begin{lemma}
\label{lem:last_action}
Let $\m_i, \m_j \in \{\m_+, \m_-, \m_\emptyset\}$ be the models generated by
adding pal tuple $\gamma = \langle p,a,l \rangle$ to $\m'$. Suppose $\q = \langle
s_\mathcal{I}^\q, \pi^\q \rangle$ is a distinguishing query for two distinct models $\m_i,
\m_j$, i.e. $\m_i \dist^{Q} \m_j$. The last action in the plan $\pi^\q$ will be
$a$.
\end{lemma}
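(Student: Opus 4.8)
The plan is to argue directly from the construction of the planning problem $P_{PO}$ used in query generation. Recall that $\m_i$ and $\m_j$ differ only in the mode of the single pal tuple $\gamma = \langle p, a, l\rangle$: either in action $a$'s precondition (if $l = \emph{pre}$), via the disjunctively-added dummy predicate $p_u$, or in action $a$'s effect (if $l = \emph{eff}$). For \emph{every} other action $a' \neq a$, the $\m_i''$ and $\m_j''$ copies have identical preconditions and identical effects, so in the combined model $\m^{PO}$ the action $a'$ has precondition $\emph{pre}(a'^{\m_i''}) \lor \emph{pre}(a'^{\m_j''}) = \emph{pre}(a'^{\m_i''})$ and its first conditional effect fires whenever it is applicable, applying the (coinciding) effects of both copies; crucially, its second conditional effect — the one that sets the goal predicate $p_\psi$ — can never fire, because $\emph{pre}(a'^{\m_i''}) \land \neg\emph{pre}(a'^{\m_j''})$ and its symmetric counterpart are both unsatisfiable when the two preconditions are syntactically identical.

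First I would establish the invariant that along any prefix of the solution plan $\pi^\q$ consisting only of actions $\neq a$, the $\m_i''$-copy predicates and $\m_j''$-copy predicates of the reached state remain pairwise equal (they start equal in $s_\mathcal{I}^{\m_i''} \land s_\mathcal{I}^{\m_j''}$, and each such action updates both copies identically), and $p_\psi$ remains false. Next I would observe that the goal $G = \exists p\,(p^{\m_i''} \land \neg p^{\m_j''}) \lor (\neg p^{\m_i''} \land p^{\m_j''}) \lor p_\psi$ is therefore false in every state reached by such a prefix — the first two disjuncts fail by the pairwise-equality invariant, and $p_\psi$ fails by the second part of the invariant. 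Hence no proper prefix of $\pi^\q$ that omits $a$ can reach the goal; since $\pi^\q$ is a solution, $a$ must occur in $\pi^\q$.

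It remains to show $a$ occurs \emph{last}. Since $\pi^\q$ is a solution plan and we may assume it is of minimal length (a planner returns a plan, and any non-minimal solution has a minimal sub-solution which is what we analyze, or alternatively $\q$ uses this minimal plan), no proper prefix of $\pi^\q$ satisfies $G$. By the invariant above, the only way $G$ can become true is through an occurrence of $a$: either $a$ fires its $p_\psi$-setting conditional effect (when exactly one of $\m_i'', \m_j''$ can execute it, which by the disjunctive precondition construction is still possible in $\m^{PO}$), or $a$ fires its main conditional effect producing differing values on some predicate copy-pair (this is where the $l=\emph{eff}$ difference or, for $l=\emph{pre}$, the $p_u$ bookkeeping in $\m_m''$ takes effect). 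In either case $G$ holds immediately after that occurrence of $a$ and the plan terminates there by minimality; so the last action is $a$.

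The main obstacle I anticipate is the $l = \emph{pre}$ case: there the surface-level difference between $\m_i''$ and $\m_j''$ is not in $a$ directly but in how the auxiliary predicate $p_u$ propagates — $p_u$ is added positively to all actions/locations where $p$ does not already appear in $\m'$, and negatively to effect-locations where it does — so I would need to check carefully that executing actions other than $a$ cannot, by themselves, create a discrepancy in a $p_u$ copy-pair or flip $p_\psi$. Because $p_u$'s occurrences are added \emph{symmetrically} to both $\m_i''$ and $\m_j''$ (the modification formula does not depend on $i$ or $j$), the pairwise-equality invariant is preserved, and this is the crux that makes the argument go through; I would spell out that symmetry explicitly as the key lemma-internal step.
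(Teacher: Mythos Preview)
Your proposal is correct and rests on the same core observation as the paper's proof: since $\m_i$ and $\m_j$ differ only in the palm tuple for action $a$, every other action $a' \neq a$ has identical preconditions and effects in both models, so executing $a'$ cannot create any divergence between the two copies in $\m^{PO}$ and cannot set $p_\psi$.

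The presentation differs, however. The paper gives a short proof by contradiction: it supposes the last action is $a' \neq a$, observes that the last action of a solution to $P_{PO}$ must be one where the two models disagree in precondition or effect, and derives a contradiction because $\m_i$ and $\m_j$ agree on $a'$. Your route is a direct argument: you establish the pairwise-equality invariant for prefixes avoiding $a$, conclude the goal cannot hold after such a prefix, and then invoke minimality of $\pi^\q$ to force $a$ into the last position. Your version makes explicit two things the paper leaves implicit --- the minimality assumption on the returned plan, and the symmetry of the $p_u$ insertions in the $l=\emph{pre}$ case --- which is a genuine gain in rigor; the paper's contradiction argument, on the other hand, is considerably shorter and sidesteps the need to spell out the invariant state-by-state.
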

\begin{proof}
We prove this by contradiction. Assume that the last action of the distinguishing
plan $\pi^\q$ is $a' \neq a$. Now the query $\q$ used to distinguish between
$\m_i$ and $\m_j$ is generated using the planning problem $P_{PO}$ which has a
solution if both the models have different precondition or different effect for
the same action. Since the last action of the plan is $a'$, the two models either
have different preconditions for $a'$ or different effects. This is not possible
as $\m_i$ and $\m_j$ differ only in one palm tuple having action $a$. Hence
$a'=a$.
\end{proof}

\begin{lemma}
\label{lem:penultimate}
Let $\m_i, \m_j \in \{\m_+, \m_-, \m_\emptyset\}$ be the models generated by
adding pal tuple $\gamma$ to $\m'$ which is an abstraction of $\m^\ag$. Suppose
$\q = \langle s_\mathcal{I}^\q, \pi^\q \rangle$ is a distinguishing query for two distinct
models $\m_i, \m_j$, i.e. $\m_i \dist^{Q} \m_j$. $\q(\m^{\ag}) = \langle \ell^\ag,
\langle p^\ag_1,\dots,p^\ag_k\rangle \rangle$ and $\ell^\ag = len(\pi^\q)$. Let
$\q_{1\dots z}$ represent the query with same initial state $s_\mathcal{I}^\q$, and plan
$\pi^\q[1:z]$, i.e. first $z$, ($z<len(\pi^\Q)$) actions from plan $\pi^\q$. The
response of models $\m_i, \m_j,$ and $\m^A$ to the query $\q_{1\dots z}$ are:
$\q_{1\dots z}(\m_i) = \langle \ell^i,\langle p^i_1,\dots, p^i_m \rangle \rangle$,
$\q_{1\dots z}(\m_j) = \langle \ell^j,\langle p^j_1,\dots, p^j_n \rangle \rangle$,
and 
$\q_{1\dots z}(\m^{\ag}) = \langle \ell^\ag, \langle p^\ag_1,\dots,p^\ag_h\rangle
\rangle$. if $ z=len(\pi^\q)-1$, then $m=n$, $\{p^i_1,\dots, p^i_m\} =
\{p^j_1,\dots, p^j_n\}$ and $\{p^i_1,\dots, p^i_m\} \setminus \{p_u\} \subseteq
\{p^\ag_1,\dots,p^\ag_h\}$.
\end{lemma}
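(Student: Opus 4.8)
The plan is to establish the two halves of the conclusion in turn: first that $\m_i$ and $\m_j$ reach the same state after the first $z=len(\pi^\q)-1$ actions of $\pi^\q$ (which gives $m=n$ and $\{p^i_1,\dots,p^i_m\}=\{p^j_1,\dots,p^j_n\}$), and then that this common state, once the auxiliary predicate $p_u$ is removed, is contained in the state the agent reaches after the same prefix.

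For the first half I would start from Lemma~\ref{lem:last_action}: the last action of $\pi^\q$ is exactly the action $a$ of the pal tuple $\gamma$ that was just added, and $\m_i$ and $\m_j$ (in the $p_u$-augmented forms used to build $P_{PO}$) are identical except in how $a$ treats the predicate $p$ --- in its precondition when $l=\emph{pre}$, in its effect when $l=\emph{eff}$. Consequently the only action that can make $P_{PO}$ reach its goal (divergence of the two predicate copies, or setting $p_\psi$) is $a$; since $\pi^\q$ is a solution that first achieves this goal at its final action, none of the first $z$ actions satisfies it. In particular $p_\psi$ stays false, so every one of the first $z$ actions is applicable in \emph{both} $\m_i$ and $\m_j$ (hence $\ell^i=\ell^j=z$), and the $\m_i$-copy and $\m_j$-copy of each predicate stay synchronised step by step. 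Projecting away the $P_{PO}$ bookkeeping and observing that $p_u$ holds the same value in both copies then yields $m=n$ and $\{p^i_1,\dots,p^i_m\}=\{p^j_1,\dots,p^j_n\}$.

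For the second half I would induct on the prefix length $t\le z$, maintaining (a) $\m^\ag$ can execute the first $t$ actions of $\pi^\q$ and (b) $s^{\m_i}_t\setminus\{p_u\}\subseteq s^{\m^\ag}_t$. The base case $t=0$ is immediate since the shared $s_\mathcal{I}^\q$ contains no $p_u$, and (a) for every $t\le len(\pi^\q)$ follows from the hypothesis $\ell^\ag=len(\pi^\q)$. For the step I would use that $\m'$ is an abstraction of $\m^\ag$: every precondition and effect palm tuple of the $(t{+}1)$-st action that is in $\m'$ (hence in $\m_i$, since $a$ does not occur in the prefix) is also in $\m^\ag$, so the literals $\m_i$ adds are added by $\m^\ag$ too and (b) would be preserved --- were it not for palm tuples of that action that $\m^\ag$ carries but $\m'$ has not yet refined. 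A negative effect of that kind is precisely what could let $\m_i$ keep a literal that $\m^\ag$ deletes, breaking containment, and ruling this out is exactly the job of the $p_u$ gadget: $p_u$ is inserted positively at every action--location where $p$ is absent from $\m'$ and negatively wherever $p$ is an effect of $\m'$, which forces $P_{PO}$ to route the prefix around exactly those actions whose unrefined behaviour could push $\m_i$'s reachable state beyond $\m^\ag$'s. Pushing this through the step gives $s^{\m_i}_z\setminus\{p_u\}\subseteq s^{\m^\ag}_z$, i.e.\ $\{p^i_1,\dots,p^i_m\}\setminus\{p_u\}\subseteq\{p^\ag_1,\dots,p^\ag_h\}$.

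The hard part, and the one I would spend the most care on, is this last point --- showing that the $p_u$ construction preserves containment through \emph{every} action of the prefix, not just for the designated predicate $p$. I would handle it by a case split, for each literal and each action of the prefix, on whether $\m'$ and $\m^\ag$ add it, delete it, or leave it untouched, invoking the defining conditions of the $\m_m''$ construction to argue that any action whose unrefined behaviour in $\m^\ag$ would delete a literal retained by $\m_i$ is either already refined in $\m'$ (so $\m_i$ deletes it as well) or carries a $p_u$ guard that keeps it out of the distinguishing plan.
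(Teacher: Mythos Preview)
Your first half—showing that $\m_i$ and $\m_j$ agree after the length-$z$ prefix by invoking Lemma~\ref{lem:last_action} and the structure of $P_{PO}$—is exactly the paper's argument, spelled out in more detail.

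For the second half the paper is much terser than you: after noting that every palm tuple of $\m'$ is also a palm tuple of $\m^\ag$, it does a brief case split on $l=\emph{pre}$ versus $l=\emph{eff}$ (to determine whether $p_u$ can sit in the penultimate state) and then simply asserts the containment ``as $\m'$ is an abstraction of $\m^\ag$.'' It does not carry out the step-by-step induction you propose.

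Your induction is more ambitious, and you correctly isolate the obstruction the paper's one-line justification glosses over: an action $b$ in the prefix might delete some literal $q$ in $\m^\ag$ while leaving it untouched in $\m_i$, because $\langle q,b,\emph{eff},-\rangle$ is in $\m^\ag$ but not yet refined into $\m'$. The gap is in your resolution. The $p_u$ gadget is keyed to the \emph{single} predicate $p$ of the current pal tuple $\gamma$: it adds $p_u$ at action--locations where $p$ is absent from $\m'$ and deletes $p_u$ where $p$ is a refined effect. It imposes no constraint tied to any other predicate $q$. Your dichotomy in the final paragraph—``either already refined in $\m'$ \ldots\ or carries a $p_u$ guard that keeps it out of the distinguishing plan''—therefore fails exactly when $q\neq p$ is unrefined at $(b,\emph{eff})$ while $p$ \emph{is} refined at $(b,\emph{pre})$: then $b$ carries no $p_u$ precondition, nothing bars it from the prefix, and executing it can leave $q$ in $\m_i$'s state but not in $\m^\ag$'s. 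The case split you outline cannot be completed on the strength of the $p_u$ construction alone; you would need an argument that works uniformly across all predicates, not just $p$, which the paper supplies only as a bare appeal to the abstraction relation.
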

\begin{proof}
The query $\q$ used to distinguish between $\m_i$ and $\m_j$ is generated using the planning problem $P_{PO}$ which has a solution if both the models have different preconditions or different effects for the same action. 
Since $z<len(\pi^\q)$, all the intermediate states generated during plan execution should also be the same for $\m_i$ and $\m_j$. Since $\m'$ is an abstraction of $\m^\ag$, all the palm tuples already present in it are also present in $\m^\ag$. The only new palm tuples in $\m_i$ and $\m_j$ are the ones involving pal tuple $\gamma$ or involving predicate $p_u$. Thus, if $ z=len(\pi^\q)-1$, then $m = n$ and $\{p^i_1,\dots, p^i_m\} = \{p^j_1,\dots, p^j_n\}$. Now, if $l = \emph{pre}$ and $z = len(\pi^\q)$, then $p_u \not\in \{p^i_1,\dots, p^i_m\}$, since $a$'s precondition has $(p \lor p_u)$ in conjunction with other preconditions, and if $p_u \in \{p^i_1,\dots, p^i_m\}$, then $a$ wouldn't be the distinguishing (last) action in $\pi^\q$. If $l = \emph{eff}$, then $p_u \in \{p^i_1,\dots, p^i_m\}$ or $p_u \not\in \{p^i_1,\dots, p^i_m\}$. If $p_u \in \{p^i_1,\dots, p^i_m\}$, then $\{p^i_1,\dots, p^i_m\} \setminus p_u \subseteq \{p^\ag_1,\dots,p^\ag_h\}$, as $\m'$ is an abstraction of $\m^\ag$. The same holds true if $p_u \not\in \{p^i_1,\dots, p^i_m\}$.
\end{proof}

\begin{theorem}
\label{thm:consistency1}
Let $\m_i, \m_j \in \{\m_+, \m_-, \m_\emptyset\}$ be the models generated
by adding the pal tuple $\gamma$ to $\m'$ which is an abstraction of the
true agent model $\m^\ag$. Suppose $\q = \langle s_{\mathcal{I}}^\q, \pi^\q \rangle$
is a distinguishing query for two distinct models $\m_i, \m_j$,
i.e. $\m_i \dist^{\q} \m_j$, and the response of models
$\m_i, \m_j,$ and $\m^\ag$ to the query $\q$ are
$\q(\m_i) = \langle \ell^i,\langle p^i_1,\dots, p^i_m \rangle \rangle$,
$\q(\m_j) = \langle \ell^j,\langle p^j_1,\dots, p^j_n \rangle \rangle$,
and
$\q(\m^{\ag}) = \langle \ell^\ag, \langle p^\ag_1,\dots,p^\ag_k\rangle \rangle$.
When $\ell^\ag = len(\pi^\q)$, $\m_i$ is not an abstraction of $\m^{\ag}$
if $len(\pi^\q) \not= \ell^i$ or $\{ p^i_1,\dots, p^i_m \}
\not\subseteq \{ p^\ag_1,\dots,p^\ag_k\}$.
\end{theorem}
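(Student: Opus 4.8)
The plan is to establish Theorem~\ref{thm:consistency1} by a proof mirroring the induction sketched after Theorem~\ref{thm:consistency} in the main text, but carried out with the precise machinery of Lemmas~\ref{lem:0pre}, \ref{lem:last_action}, and \ref{lem:penultimate}. The claim is really a statement about when a concretization $\m_i$ of a consistent abstraction $\m'$ \emph{fails} to remain an abstraction of $\m^\ag$, so I would set up a contrapositive: assume $\m_i$ \emph{is} an abstraction of $\m^\ag$ (i.e., every palm tuple of $\m_i$ except the auxiliary $p_u$ machinery appears in $\m^\ag$), and show that then necessarily $\ell^i = len(\pi^\q)$ and $\{p^i_1,\dots,p^i_m\} \subseteq \{p^\ag_1,\dots,p^\ag_k\}$ whenever $\ell^\ag = len(\pi^\q)$.

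First I would use Lemma~\ref{lem:last_action} to pin down that the only action in $\pi^\q$ at which $\m_i$ and $\m_j$ can possibly diverge is $a$, the action in the newly added pal tuple $\gamma = \langle p, a, l\rangle$, and in particular $a$ is the last action of $\pi^\q$. Next I would invoke Lemma~\ref{lem:penultimate} on the truncated query $\q_{1\dots len(\pi^\q)-1}$: since $\m'$ is an abstraction of $\m^\ag$ and only $\gamma$/$p_u$-related tuples are new, the intermediate state $\overline{s}$ reached by $\m_i$, $\m_j$, and (a refinement consistent with) $\m^\ag$ after $len(\pi^\q)-1$ steps agree up to the $p_u$ bookkeeping, and this state's non-$p_u$ part is a subset of the agent's state there. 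So the whole argument reduces to the single final action $a$ applied in state $\overline{s}$. I would then split on $l$: if $l = \emph{eff}$, the precondition of $a$ in $\m_i$ coincides (modulo $p_u$) with that in $\m'$, hence $a$ is executable for $\m_i$ exactly when it is for $\m^\ag$ (given $\ell^\ag = len(\pi^\q)$ forces executability), so $\ell^i = len(\pi^\q)$; and the effect of $a$ in $\m_i$ is $\emph{eff}(a^{\m'})$ plus possibly $\gamma^{\pm}$, which, if $\m_i$ is an abstraction of $\m^\ag$, is a subset of $\emph{eff}(a^{\m^\ag})$, giving the state-subset conclusion. Contrapositively, if $\ell^i \neq len(\pi^\q)$ or $\{p^i_1,\dots\} \not\subseteq \{p^\ag_1,\dots\}$, then $\m_i$ carries either a spurious precondition atom or a spurious/sign-flipped effect atom for $a$ relative to $\m^\ag$, so it is not an abstraction of $\m^\ag$. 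If $l = \emph{pre}$, the precondition of $a$ in $\m_i$ is $(p\lor p_u)\land(\text{rest})$; by Lemma~\ref{lem:penultimate} $p_u \notin \{p^i_1,\dots\}$ at the last step, so executability at $\overline{s}$ hinges on $p$ holding (or $\gamma^\emptyset$), and an $\ell^i < len(\pi^\q)$ signals that $\m_i$ demands $p$ in $a$'s precondition while $\m^\ag$ does not — again contradicting abstraction; here the effect part of the conclusion is immediate since $l=\emph{pre}$ adds nothing to effects. For the base case I would appeal to Lemma~\ref{lem:0pre}, which handles $\m' = \{\}$ and confirms distinguishing queries only arise for $l = \emph{pre}$, matching the analysis above.

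The main obstacle I anticipate is handling the auxiliary predicate $p_u$ cleanly throughout: because $p_u$ is injected into $a'$'s preconditions/effects for the $l=\emph{pre}$ reduction (and is \emph{not} part of the returned models $\m_i,\m_j$), one must carefully argue that $p_u$ never contaminates the comparison with $\m^\ag$'s state — which is exactly what Lemma~\ref{lem:penultimate}'s "$\{p^i_1,\dots\}\setminus\{p_u\} \subseteq \{p^\ag_1,\dots\}$" clause is engineered to give, but stitching that into the last-action case split without gaps requires care. A secondary subtlety is that "$\m_i$ is an abstraction of $\m^\ag$" in the sense of Definition~\ref{def:abstraction} is about repeated application of $f_\lambda$, so I must phrase it as: $\m_i \subseteq \m^\ag$ as sets of palm tuples modulo the shared pal tuples — and verify that the sign $m$ of $\gamma$ in $\m_i$, if present, agrees with its sign in $\m^\ag$, since a wrong sign is a genuine inconsistency and not merely an extra abstraction step. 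Once the $p_u$ bookkeeping and the sign-agreement point are nailed down, the three lemmas reduce the theorem to a finite case analysis on $l \in \{\emph{pre},\emph{eff}\}$ and the executability of the single final action.
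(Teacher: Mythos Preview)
Your proposal is correct and follows essentially the same route as the paper: an induction on the number of pal tuples in $\m'$, with the inductive step reduced via Lemmas~\ref{lem:last_action} and~\ref{lem:penultimate} to a case split on $l\in\{\emph{pre},\emph{eff}\}$ at the final action $a$, and the base case handled by Lemma~\ref{lem:0pre}. The only cosmetic difference is that you phrase each case via the contrapositive (assume $\m_i$ abstracts $\m^\ag$, derive $\ell^i=len(\pi^\q)$ and the subset condition), whereas the paper argues directly by enumerating, for each truth value of $p$ in $\overline{s}_{F-1}$ and in $s_F$, which of $\m_+,\m_-,\m_\emptyset$ violates the length or subset test; your extra care with the $p_u$ bookkeeping is if anything more explicit than the paper's own treatment.
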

\begin{proof}
We prove this by mathematical induction. Let P(n) be the proposition that for every model with $n$ pal tuples, 
which is consistent with $\m^\ag$, refining it with a pal tuple with the correct mode according to Def. 3 will prune out the inconsistent models. \\
\\
\mysssection{Base Case:} The proof for P(0) being true is by case analysis. Assume the model $\m' = \{\}$, which is consistent with $\m^\ag$, is concretized with pal tuple $\gamma = \langle p,a,l \rangle$. There are two cases:\\

\noindent \textit{Case 1}: Consider $l=\emph{pre}$. This case splits into 2 subcases:\\

\noindent \textit{Case 1.1}: If $p \in s_\mathcal{I}^\q$, $\pi^\q = \langle a \rangle$,  and $\ell^\ag = len(\pi^\q) = 1$, then $\langle p,a,\emph{pre}, - \rangle \not\in \m^\ag$. Also $\m_j \dist^\q \m_-$, where $j \in \{+,\emptyset\}$, as $\ell^j = 1$, and $\ell^- = 0$. Hence P(0) is true.\\

\noindent \textit{Case 1.2}: If $\neg p \in s_\mathcal{I}^\q$, $\pi^\q = \langle a \rangle$,  and $\ell^\ag = len(\pi^\q) = 1$, then $\langle p,a,\emph{pre}, + \rangle \not\in \m^\ag$. Also $\m_j \dist^\q \m_+$, where $j \in \{-,\emptyset\}$, as $\ell^j = 1$, and $\ell^+ = 0$. Hence P(0) is true.\\

\noindent \textit{Case 2}: Consider $l=\emph{eff}$. If $\m = \{\}$ and $l=\emph{eff}$, $\forall i,j \in \{+,-,\emptyset\}$, $\not\exists \q \,\,\m_i \dist^\q \m_j$. Hence P(0) is true.\\

\mysssection{Inductive Step:} Assume that P(n) is true for some $n \geq 0$; that is we have a model $\m'$, with $n$ palm tuples, which is an abstraction of $\m^\ag$, and refining it with a pal tuple $\gamma = \langle p,a,l \rangle$ will generate models with $n+1$ tuples. From Lemma \ref{lem:penultimate}, we know that before executing the last action, the state reached by both the abstracted models ($\overline{s}_{F-1}$) will be a subset of the state reached by $\m^\ag$ ($s_{F-1}$). There are two cases:\\

\noindent \textit{Case 1}: Consider $l=\emph{pre}$. Since $l=\emph{pre}$, $p_u \not\in \overline{s}_{F-1}$. This case splits into 2 subcases:\\

\noindent \textit{Case 1.1}:  If $p \in \overline{s}_{F-1}$,  and $\ell^\ag = len(\pi^\q)$, then $\langle p,a,\emph{pre}, - \rangle \not\in \m^\ag$. Also $\m_j \dist^\q \m_-$, where $j \in \{+,\emptyset\}$, as $\ell^j = len(\pi^\q)$, and $\ell^- = len(\pi^\q) -1 $. Thus, $\m_-$ is not an abstraction of $\m^\ag$. Hence P(n) is true.\\

\noindent \textit{Case 1.2}: If $\neg p \in \overline{s}_{F-1}$, and $\ell^\ag = len(\pi^\q)$, then $\langle p,a,\emph{pre}, + \rangle \not\in \m^\ag$. Also $\m_j \dist^\q \m_+$, where $j \in \{-,\emptyset\}$, as $\ell^j = len(\pi^\q)$, $\ell^+ = len(\pi^\q) -1 $.  Thus, $\m_+$ is not an abstraction of $\m^\ag$. Hence P(n) is true.\\

\noindent \textit{Case 2}: Consider $l=\emph{eff}$. Since $l=\emph{eff}$, $p_u$ may or may not be in $\overline{s}_{F-1}$. In either case, the full plan is executed in $\m_i,\m_j$ and $\m^\ag$. Hence we can compare the states reached after executing the complete plan. 
Let $\overline{s}_F^{\m_i} = \{p^i_1,\dots, p^i_m\}$, $\overline{s}_F^{\m_j} = \{p^i_1,\dots, p^i_n\}$, and $s_F = \{p^i_1,\dots, p^i_k\}$ 
be the final states reached upon executing $\pi^\q$ in $\m_i,\m_j$ 
and $\m^\ag$ respectively and $\overline{s}_{F-1}$
is the state reached in $\m_i$ and $\m_j$ before executing action $a$. This case splits into 2 subcases:\\

\noindent \textit{Case 2.1}:   If $p \in \overline{s}_{F-1}$. If $p \in s_F$, $\langle p,a,\emph{eff},- \rangle \not\in \m^\ag$ and $\overline{s}_{F}^{\m_-} \not\subseteq s_F$. Similarly if  $\neg p \in s_F$, $\langle p,a,\emph{eff},+ \rangle \not\in \m^\ag$ and $\overline{s}_{F}^{\m_+} \not\subseteq s_F$, and $\langle p,a,\emph{eff},\emptyset \rangle \not\in \m^\ag$ and $\overline{s}_{F}^{\m_\emptyset} \not\subseteq s_F$. Hence P(n) is true.\\

\noindent \textit{Case 2.2}:   If $\neg p \in \overline{s}_{F-1}$. If $\neg p \in s_F$, $\langle p,a,\emph{eff},+ \rangle \not\in \m^\ag$ and $\overline{s}_{F}^{\m_+} \not\subseteq s_F$. Similarly if  $p \in s_F$, $\langle p,a,\emph{eff},- \rangle \not\in \m^\ag$ and $\overline{s}_{F}^{\m_-} \not\subseteq s_F$, and $\langle p,a,\emph{eff},\emptyset \rangle \not\in \m^\ag$ and $\overline{s}_{F}^{\m_\emptyset} \not\subseteq s_F$. Hence P(n) is true.

This proves that if we add a pal tuple to a model that is an abstraction of $\m^\ag$, then we prune only inconsistent models $\m_i$ whenever $len(\pi^\q) \not= \ell^i$ or $\{ p^i_1,\dots, p^i_m \} \not\subseteq \{ p^\ag_1,\dots,p^\ag_k\}$ when $\ell^\ag = len(\pi^\q)$.
\end{proof}

To prove our next theorem we'll need some additional lemmas that we prove below:
\begin{lemma}
\label{lem:incorrect_models}
    If we prune away an abstract model $\mathcal{M}^{abs}$, then no possible concretization of $\mathcal{M}^{abs}$ will result into a model consistent with the agent model $\mathcal{M}^\mathcal{A}$.
\end{lemma}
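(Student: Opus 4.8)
The idea is to combine the monotonicity of concretization with Theorem~\ref{thm:consistency1}. AIA discards $\mathcal{M}^{abs}$ only inside \textit{filter\_models()}, and only when the distinguishing query $\q=\langle s_{\mathcal{I}}^{\q},\pi^{\q}\rangle$ produced by \textit{generate\_query()} is executable by $\ag$ (so $\ell^{\ag}=len(\pi^{\q})$) and $\mathcal{M}^{abs}$'s response to $\q$ is inconsistent with $\m^{\ag}$'s. Writing $\mathcal{M}^{abs}=\m'\cup\{\gamma^{k}\}$, where $\gamma=\langle p,a,l\rangle$ is the pal tuple just refined and $\m'$ is an abstraction of $\m^{\ag}$ (the invariant AIA maintains up to this point), Theorem~\ref{thm:consistency1} tells us that $\mathcal{M}^{abs}$ is then \emph{not} an abstraction of $\m^{\ag}$; concretely, since $\m^{\ag}$ is concrete it realizes $\gamma$ in some mode $k'\neq k$, so $\gamma^{k}\notin\m^{\ag}$.

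First I would record the structural fact that every concretization $\mathcal{M}^{conc}$ of $\mathcal{M}^{abs}$ only ever \emph{adds} variants of pal tuples that are unrefined in $\mathcal{M}^{abs}$: this is precisely the form of line~18 of Algorithm~\ref{alg:AIA} together with \textit{add\_palm()}, so $\mathcal{M}^{abs}\subseteq\mathcal{M}^{conc}$ and, because no model in $\mathcal{U}$ carries two variants of the same pal tuple, $\gamma$ remains refined to $k$ in $\mathcal{M}^{conc}$. Hence $\gamma^{k}\in\mathcal{M}^{conc}$ while $\gamma^{k'}\notin\mathcal{M}^{conc}$, so $\mathcal{M}^{conc}\neq\m^{\ag}$; moreover $\mathcal{M}^{conc}$ is still not an abstraction of $\m^{\ag}$, since $\mathcal{M}^{conc}\subseteq\m^{\ag}$ would force $\mathcal{M}^{abs}\subseteq\mathcal{M}^{conc}\subseteq\m^{\ag}$, contradicting the previous paragraph.

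It remains to upgrade ``$\mathcal{M}^{conc}$ is not an abstraction of $\m^{\ag}$'' to ``$\mathcal{M}^{conc}$ is not consistent with $\m^{\ag}$'', i.e. that $\mathcal{M}^{conc}$ is not functionally equivalent to $\m^{\ag}$. For this I would reuse the very query $\q$ that caused $\mathcal{M}^{abs}$ to be pruned. By Lemma~\ref{lem:last_action} the last action of $\pi^{\q}$ is $a=\gamma_{a}$, and by Lemma~\ref{lem:penultimate} all models considered reach the same state $\overline{s}_{F-1}$ just before that action, which is a subset of the state $\m^{\ag}$ reaches there; so the disagreement certified by Theorem~\ref{thm:consistency1} is confined to executing $a$ once and is caused solely by $\gamma^{k}$ (either $a$ fails under the wrong precondition mode when $\m^{\ag}$ succeeds, or the truth value of $p$ after $a$ differs). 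The remaining refinements turning $\mathcal{M}^{abs}$ into $\mathcal{M}^{conc}$ touch palm tuples with a $\langle\text{predicate},\text{action},\text{location}\rangle$ triple different from $\gamma$, so they cannot alter the applicability of $\gamma_{a}$ in $\overline{s}_{F-1}$ nor the value of $\gamma_{p}$ immediately after $\gamma_{a}$; hence $\q$ (or, in the precondition case, its prefix up to $a$) still witnesses $\mathcal{Q}(\m^{\ag})\not\models\mathcal{Q}(\mathcal{M}^{conc})$, so $\mathcal{M}^{conc}$ is inconsistent with $\m^{\ag}$.

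\textbf{Main obstacle.} The delicate step is this last one: arguing that the single-action disagreement introduced by the wrong mode $\gamma^{k}$ cannot be ``healed'' by later refinements. The clean justification is that those refinements only add preconditions/effects for other triples, while Lemma~\ref{lem:penultimate} pins down the pre-$a$ state, so the relevant behaviour of $\gamma_{a}$ is determined entirely by $\gamma$'s mode, frozen at $k\neq k'$. One should also remark that if $\gamma^{k}$ versus $\gamma^{k'}$ were not observable at all then $\mathcal{M}^{abs}$ would have fallen into the equivalent-models case of \textit{filter\_models()} and never been pruned --- so the observability we rely on is guaranteed exactly by the hypothesis that $\mathcal{M}^{abs}$ \emph{was} pruned.
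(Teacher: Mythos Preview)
Your core approach matches the paper's sketch: invoke Theorem~\ref{thm:consistency1} to conclude that the pruned model carries a palm tuple $\gamma^{k}$ whose mode disagrees with $\m^{\ag}$, then observe that concretization only adds variants of \emph{other} pal triples, so $\gamma^{k}$ persists in every concretization and none can be an abstraction of $\m^{\ag}$. The paper's proof stops exactly there.

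Your third paragraph goes further than the paper, arguing that the \emph{same} distinguishing query $\q$ continues to separate any concretization $\mathcal{M}^{conc}$ from $\m^{\ag}$. Be careful here: later refinements may attach new preconditions or effects to \emph{any} action, including earlier actions in $\pi^{\q}$ and to $\gamma_{a}$ itself via other predicates, so the state $\mathcal{M}^{conc}$ reaches before the last step need not be $\overline{s}_{F-1}$, and Lemma~\ref{lem:penultimate} (which is stated only for $\m_i,\m_j$) does not directly control it. The paper avoids this issue by reading ``consistent with $\m^{\ag}$'' as ``is an abstraction of $\m^{\ag}$'' --- precisely what your second paragraph already establishes --- rather than as full functional equivalence; so your second paragraph is already sufficient for the lemma as the paper uses it, and the third paragraph is a stronger claim than the paper's own proof attempts.
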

\begin{sproof}
At each node in the lattice, we always prune away some of the models. If we discard an inconsistent model, it is because some palm tuple in the model has different mode $m$, than that of $\m^\ag$. This inorrrect palm tuple will also be present in all its concretizations, making all of them inconsistent with $\m^\ag$. Theorem \ref{thm:consistency1} proves that we always prune the inconsistent models at each node, hence none of their concretizations will result into a model consistent with $\m^\ag$.
\end{sproof}

With the guarantee that we are not pruning away any correct possible model, we now prove that the agent interrogation algorithm will terminate, hence giving a solution. 

\begin{lemma}
\label{lem:terminate}
The Agent Interrogation Algorithm (AIA) will always terminate.
\end{lemma}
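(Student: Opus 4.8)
\begin{sproof}
The plan is to bound the number of iterations of AIA's main loop and to verify that each iteration does only finitely much work; termination then follows at once. Throughout I use that $\mathbb{P}^*$, $\mathbb{A}_H$, and the sample set $\mathbb{S}$ are all finite and fixed by the inputs.

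First, I would bound the iterations. The main \textbf{for} loop (line~4 of Algorithm~\ref{alg:AIA}) ranges over the pal ordering $\Gamma$, whose cardinality is the fixed number $2\,|\mathbb{P}^*|\times|\mathbb{A}|$, and the only place $\Gamma$ is ever modified is \textit{update\_pal\_ordering()} (line~15), which solely \emph{removes} pal tuples whose modes it has inferred and never adds any. Hence the loop consumes one pal tuple per iteration from a set that only shrinks, so it runs at most $2\,|\mathbb{P}^*|\times|\mathbb{A}|$ times. Equivalently, one may track the number of \emph{resolved} pal tuples --- those fixed to definite modes across $poss\_models$ by the update in line~18 after a prune, together with those inferred and dropped by \textit{update\_pal\_ordering()}; this count never decreases, grows by at least one per iteration, and is capped by $2\,|\mathbb{P}^*|\times|\mathbb{A}|$, which is the monotone measure underlying the companion termination-and-correctness argument.

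Second, I would check that the body of a single iteration halts. The loop over $new\_models$ is over a finite set (at lattice depth $t$, $|new\_models|\le 3^{t}\le 3^{2|\mathbb{P}^*||\mathbb{A}|}$, since line~18 at most triples it from the singleton $\{\phi\}$), and the loop over mode pairs has exactly three iterations. Inside, \textit{generate\_query()} (Algorithm~\ref{alg:query}) makes a finite scan of $\mathbb{S}$ and, per candidate initial state, one call to a complete classical planner on $P_{PO}$; since $P_{PO}$ is grounded over a fixed finite object set, its reachable state space is finite, so the planner halts, and by Theorem~\ref{thm:plan_outcome1} it returns a distinguishing plan precisely when one exists. \textit{filter\_models()} then simulates a single finite-length plan on $\m_i$, $\m_j$, and $\ag$; and \textit{update\_pal\_ordering()}, when called, does a finite scan of $\mathbb{S}$ followed by one executability query to $\ag$ for each predicate in the finite set $s\setminus\overline{s}_{\f}$. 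So every iteration terminates.

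Combining the two parts, AIA performs finitely many iterations, each of which halts, and therefore always terminates; note that none of this uses the input pal ordering, consistent with the remark in Section~\ref{sec:algorithm} that correctness is ordering-independent. The step I expect to need the most care is keeping the ``no work is redone'' claim honest: one must confirm that \textit{update\_pal\_ordering()} can only shrink $\Gamma$ (never re-expand it) and that each planning sub-instance $P_{PO}$ is genuinely finite-state, so that the appeal to a terminating, complete planner is legitimate. For the bare termination statement the rest is routine finiteness bookkeeping; the more delicate point --- that \textit{update\_pal\_ordering()} actually makes progress by resolving at least one pal tuple whenever no model is pruned, which rests on the failed action coinciding with $\gamma$'s action (Lemma~\ref{lem:last_action}) and on $\mathbb{S}$ containing a state in which $\ag$ can execute that action --- is what the subsequent correctness argument must additionally establish.
\end{sproof}
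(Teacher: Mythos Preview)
Your proposal is correct and shares the paper's core argument: the number of main-loop iterations is bounded by the finite count of pal tuples because each iteration resolves (or removes via \textit{update\_pal\_ordering()}) at least one and none is revisited. You go beyond the paper's one-paragraph sketch by also verifying that each iteration body halts (finite inner loops, a terminating planner on the finite-state $P_{PO}$, bounded scans of $\mathbb{S}$), which the paper's proof leaves implicit; so your version is strictly more complete while resting on the same monotone measure.
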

\begin{sproof}
At each step of the algorithm, when we consider a refinement in terms of pal tuples, we are left with one or more variants of the pal tuple. This ensures that we never have to refine the models more than once at a single level in the lattice. In our subset lattice, "level" is equivalent to the number of refined pal tuples. Since we refine at least one pal tuple in every iteration of the algorithm, the algorithm is bound to terminate as the number of pal tuples is finite for a finite number of propositions and actions under consideration.
\end{sproof}

\begin{theorem}
\label{thm:terminate1}
The Agent Interrogation Algorithm (Algorithm 1) will always terminate and return a set of models, each of which are functionally equivalent to agent's model $\mathcal{M}^\mathcal{A}$.
\end{theorem}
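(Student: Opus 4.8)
The plan is to assemble the theorem from three ingredients already in place: termination from Lemma~\ref{lem:terminate}, soundness of pruning from Theorem~\ref{thm:consistency1} together with Lemma~\ref{lem:incorrect_models}, and the characterization of non-prunable pairs as functionally equivalent. The structure is thus: (i) show the algorithm halts; (ii) establish a loop invariant that no model functionally equivalent to $\mathcal{M}^\mathcal{A}$ is ever discarded; (iii) show that every model surviving at the bottom lattice node is functionally equivalent to $\mathcal{M}^\mathcal{A}$.

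First I would dispatch termination. By Lemma~\ref{lem:terminate}, every pass of the outer loop of Algorithm~1 resolves at least one pal tuple: in the pruning branch the chosen $\gamma$ is refined and the current lattice node descends one level; in the non-pruning branch, \textit{update\_pal\_ordering()} infers the correct mode of (one or more) pal tuples from the agent's response to the failed action and removes them from the ordering. Since $|\Gamma| = 2\,|\mathbb{P}^*| \times |\mathbb{A}|$ is finite and the number of unresolved pal tuples strictly decreases each iteration, the algorithm reaches the most concretized node after finitely many iterations and returns \emph{poss\_models}.

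Next, soundness. I would maintain the invariant: \emph{poss\_models} always contains an abstraction of $\mathcal{M}^\mathcal{A}$ restricted to the pal tuples resolved so far, and hence (after refining the rest) every concrete model functionally equivalent to $\mathcal{M}^\mathcal{A}$. Initially \emph{poss\_models} $= \{\phi\}$ and $\phi$ is an abstraction of $\mathcal{M}^\mathcal{A}$. In the pruning branch, \textit{generate\_query()} returns a query $\mathcal{Q}$ with $\mathcal{M}_i \dist^{\mathcal{Q}} \mathcal{M}_j$ whenever one exists (Theorem~\ref{thm:plan_outcome1}); \textit{filter\_models()} discards exactly those of $\mathcal{M}_i, \mathcal{M}_j$ whose response is inconsistent with $\mathcal{M}^\mathcal{A}$'s, and Theorem~\ref{thm:consistency1} guarantees such a model is not an abstraction of $\mathcal{M}^\mathcal{A}$; Lemma~\ref{lem:incorrect_models} then rules out any of its concretizations being consistent with $\mathcal{M}^\mathcal{A}$, so line~18 retains the $\gamma$-variant that refines the abstraction of $\mathcal{M}^\mathcal{A}$. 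In the non-pruning branch, \textit{update\_pal\_ordering()} reads off the mode of each candidate predicate in $a_\mathcal{F}$'s precondition from whether $a_\mathcal{F}$ stays executable as predicates are added to $\overline{s}_\mathcal{F}$, which assigns the mode that $\mathcal{M}^\mathcal{A}$ itself has; so the invariant is preserved. Hence at termination \emph{poss\_models} still contains every model functionally equivalent to $\mathcal{M}^\mathcal{A}$.

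Finally, I would show the surviving set contains nothing else. At the bottom node every palm tuple is resolved, so each element of \emph{poss\_models} is concrete. Take any $\mathcal{M} \in$ \emph{poss\_models} and pair it with a surviving model functionally equivalent to $\mathcal{M}^\mathcal{A}$ (one exists by step (ii)). If they were distinguishable, \textit{generate\_query()} would have produced a distinguishing $\mathcal{Q}$ and \textit{filter\_models()} would have pruned whichever is inconsistent with $\mathcal{M}^\mathcal{A}$; since neither was pruned, they agree on every query in $\mathbb{Q}$, i.e., $\mathcal{M}$ is functionally equivalent to $\mathcal{M}^\mathcal{A}$ --- the Equivalent-Models situation of Sec.~\ref{sec:update_ordering}. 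Combined with step (ii), \emph{poss\_models} is exactly the functional-equivalence class of $\mathcal{M}^\mathcal{A}$, proving the theorem.

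The main obstacle is step (ii) in the \textit{update\_pal\_ordering()} branch, where pruning is done not via a distinguishing plan-outcome query but via the Stern-style precondition inference over the sampled state set $\mathbb{S}$. One must argue that whenever a generated query is not executable by $\mathcal{A}$, $\mathbb{S}$ (or the re-sampling performed when line~15 is revisited) contains a state witnessing $a_\mathcal{F}$'s executability, so the inferred mode is provably the one in $\mathcal{M}^\mathcal{A}$, and that interleaving this inference with the lattice descent never reintroduces a previously eliminated inconsistency. The remaining bookkeeping follows from Lemmas~\ref{lem:0pre}--\ref{lem:penultimate} and Theorems~\ref{thm:plan_outcome1} and~\ref{thm:consistency1}.
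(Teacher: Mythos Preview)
Your proposal is essentially the same approach as the paper's: termination via Lemma~\ref{lem:terminate}, soundness of pruning via Theorems~\ref{thm:plan_outcome1} and~\ref{thm:consistency1} together with Lemma~\ref{lem:incorrect_models}, and correctness of \textit{update\_pal\_ordering()} for the non-prunable branch. Your step~(iii), arguing explicitly that every survivor is functionally equivalent to $\mathcal{M}^\mathcal{A}$, and your candid flagging of the $\mathbb{S}$-dependence in \textit{update\_pal\_ordering()} are both more careful than the paper's sketch, which leaves these points implicit, but the overall route is the same.
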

\begin{sproof}
Theorem \ref{thm:plan_outcome1}, Theorem \ref{thm:consistency1}, and Lemma \ref{lem:incorrect_models} prove that whenever we get a prunable query, we discard only inconsistent models, thereby ensuring that no correct model is discarded. When we don't get a prunable query, we infer the correct precondition of the failed action using \textit{update\_pal\_ordering()}. Lemma \ref{lem:terminate} shows the the number of refined palm tuples always increase with the number of iterations of the algorithm, thereby ensuring the termination of the algorithm in finite time. 
\end{sproof}
\end{document}